\icmltitlerunning{Scalable Variational Inference in Log-supermodular Models}
\newcommand{\x}{\mathbf{x}}
\newcommand{\y}{\mathbf{y}}
\newcommand{\s}{\mathbf{s}}
\newcommand{\z}{\mathbf{z}}
\newcommand{\m}{\mathbf{m}}
\newcommand{\w}{\mathbf{w}}
\newcommand{\p}{\mathbf{p}}
\newcommand{\q}{\mathbf{q}}
\newcommand{\R}{\mathbb{R}}      \newcommand{\Ent}{\mathbb{H}}  \newcommand{\Z}{\mathcal{Z}}  \renewcommand{\L}{\mathcal{L}}      \newcommand{\B}{\mathcal{B}}      
\newcommand{\seq}{\subseteq}  
\newcommand{\algo}{{\textsc{L-Field}}\xspace}
\newcommand{\Q}{\mathcal{Q}}  
\newcommand{\OPT}{\textsc{OPT}}
\newcommand{\Div}[3]{{D_{#1}({#2} \,\|\, {#3})}}  
\DeclareMathOperator{\minimize}{minimize}
\newtheorem{lemma}{Lemma}
\newtheorem{defn}{Definition}
\newtheorem{thm}{Theorem}
\newtheorem{coro}{Corollary}
\begin{document}

\twocolumn[
\icmltitle{Scalable Variational Inference in Log-supermodular Models}

\icmlauthor{Josip Djolonga}{josipd@inf.ethz.ch}
\icmlauthor{Andreas Krause}{krausea@ethz.ch}
\icmladdress{Department of Computer Science, ETH Zurich}

\icmlkeywords{}

\vskip 0.3in
]

\begin{abstract}
    We consider the problem of approximate Bayesian inference in log-supermodular models.
    These models encompass regular pairwise MRFs with binary variables, but allow to capture high-order interactions, which are intractable for existing approximate inference techniques such as belief propagation, mean field, and variants.
    We show that a recently proposed variational approach to inference in log-supermodular models --\algo-- reduces to the widely-studied minimum norm problem for
    submodular minimization. This insight allows to leverage powerful existing tools, and hence to solve the variational problem orders of magnitude more efficiently than previously possible.
    We then provide another natural interpretation of \algo, demonstrating that it \emph{exactly} minimizes a specific type of R\'enyi divergence measure. This insight sheds light on the nature of the variational approximations produced by \algo.
    Furthermore, we show how to perform parallel inference as message passing in
    a suitable factor graph at a linear convergence rate, without having to sum up over
    all the configurations of the factor. Finally, we apply our approach to a challenging image segmentation task. Our experiments confirm scalability of our approach, high quality of the marginals, and the benefit of incorporating higher-order potentials.
\end{abstract}

\makeatletter{}\section{Introduction}

Performing inference in probabilistic models is one of the central challenges in machine
learning, providing a foundation for making decisions with uncertain data.
Unfortunately, the general problem is intractable and one must resort to approximate inference
techniques. The importance of this problem is witnessed by the amount of interest it has
attracted in the research community, which has resulted in a large family of approximations,
most notably the mean-field~\cite{wainwright-ft} and belief propagation~\cite{pearl1986fusion}
algorithms and their variants. One major drawback of these and many other techniques is the
exponential dependence on the size of the largest factor which restricts the class of
models one can use. In addition, these methods generally involve non-convex objectives,
resulting in local optima (or even non-convergence).

We consider the problem of inference in distributions over sets, also known as point
processes. Formally, we have some finite ground set $V$ and a measure $P$ that
assigns some probability $P(A)$ to every subset $A\seq V$. We would like to point out that we can
equivalently see such distributions as being defined over $|V|$ Bernoulli random variables
$X_i\in\{0,1\}$, one for every element in the ground set $i\in V$ indicating if element $i$
has been selected. As a concrete example showing this equivalence consider the task
of image segmentation in computer vision, where one wants to separate the foreground from the
background pixels. Traditionally, one defines one random variable $X_p\in\{0,1\}$ for each pixel
$p$ indicating if the pixel is in the foreground or the background.
We can also isomorphically treat the distribution as being defined over \emph{subsets} of the set
of all pixels $V$.
In this case, for any subset $A\seq V$ the quantity $P(A)$ is the probability of pixels $A$ belonging to
the foreground.
In the remaining of the paper we will employ this latter view of distributions over sets.
The additional assumption that we make is that the distribution is \emph{log-supermodular}, i.e.\ can
be written as $P(A)=\frac{1}{\Z}\exp(-F(A))$, where $F$ is some {\em submodular} function.

\paragraph{Related work.}
Submodular functions are a family of set functions exhibiting a natural diminishing returns
property, originating first in the field of combinatorial optimization~\cite{edmonds1970submodular}.
They have been applied to many problems in machine learning, including clustering~\cite{qclustering}, 
variable selection~\cite{krause05near}, structured norms~\cite{bach2010structured}, dictionary learning~\cite{cevher11greedy}, etc. Submodular functions have huge implications for the tractability of (approximate) optimization, akin to convexity and concavity in continuous domains. While the major emphasis has consequently been on optimization,  submodular functions can be also employed to define probabilistic models. Special cases include Ising models used in computer models and the determinantal point process (DPP)~\cite{kulesza2012determinantal} used for modeling diversity.
Alas, submodularity does not render the inference problem tractable, which
remains extremely difficult even for the Ising model~\cite{goldberg2007complexity,jerrum1993polynomial} which has only pairwise interactions.  

\looseness -1 The study of approximate Bayesian inference in general log-supermodular models has been recently initiated by \citet{djolonga14variational}. They provide a general variational approach --\algo -- that optimizes bounds
on the partition function via the differentials of submodular functions. While their approach leads to optimization problems that can be solved exactly in polynomial time for arbitrary high order interactions, presently the approach is slow, and impractical for large scale inference tasks such as those arising in computer vision.

\paragraph{Our contributions.} We improve over their result in several ways. First, by showing an equivalence of \algo
with a classical problem in submodular minimization -- the minimum norm point problem -- we obtain access to a large family of
specially crafted algorithms that can handle models with very large numbers of variables. In the experimental
section we indeed perform inference over images, which have hundreds of thousands of variables. This insight also implies, for example, that the approximation agrees on the mode of the distribution, hence the MAP problem is solved for free.
Secondly, by establishing another important connection, namely to a specific type of information divergence, we shed light on the type of approximations that result from this method. Thirdly, we show how special structure of many real-world log-supermodular models (such as those in image segmentation with high-order potentials) enable a highly efficient parallel message passing algorithm that converges to the global optimum at a linear rate. Lastly, we perform extensive experiments on a challenging image segmentation task, demonstrating that our approach is scalable, provides more accurate marginals than existing techniques, and provides evidence on the effectiveness of models using high-order interactions.
 
\makeatletter{}\section{Background: Submodularity and log-supermodular models}\label{sec:submodularity}

Formally, a function $F:2^V\to\R$ is said to be submodular if for any pair of sets $A\seq B$ and
$x\notin B$ it holds that
\[
    F(\{x\} \cup A) - F(A) \geq F(\{x\} \cup B) - F(B).
\]
In other words, the \emph{gain} of adding any element $x$ decreases as the context grows, which is
the diminishing returns property already mentioned. Additionally, without any loss of
generality we assume that $F$ is normalized so that $F(\emptyset)=0$.
We will consider Gibbs distributions that arise from these models, more
specifically probability measures of the form
\[
    P(S) = \frac{1}{\Z} \exp(-F(S)),
\]
for some submodular $F:2^V\to\R$. These models are called \emph{log-supermodular} or
\emph{attractive} for reasons explained below.  
\paragraph{Examples.} A typical example of such models is the regular Ising model, which can be
used for the image segmentation task from the introduction. Continuing with that
example, we define a set of edges $E$ that connect neighboring pixels, and for every pair of
neighbors $\{p, p'\}$ we specify a weight $w_{\{p,p'\}}\geq 0$ that quantifies their
similarity. To model the preference of neighbors to be assigned to the same segment,
we use the {\em cut function}
\[
    \forall A\seq V \colon F(A) = \sum_{\{p, p'\}\in E} 1_{|A\cap\{p, p'\}|=1} w_{\{p, p'\}}.
\]
Hence, if we place two neighboring pixels $p$ and $p'$ in different segments, we will
cut the edge $\{p, p'\}$ and be ``penalized'' by the corresponding weight, which explains
the attractive behavior of the model. We can go one step further and define regions
$P_i\seq V$ which we would prefer to be in the same segment. One strategy to generate the
regions, used by~\citet{kohli2009robust} is to generate superpixels, as illustrated on Figure~\ref{fig:sp}.
We can then modify the model to incorporate these higher order potentials by adding
terms of the form $\phi(|P_i\cap A|/|P_i|)$ for some concave function $\phi$. As a concrete
example, consider $\phi(z)=z(1-z)$, which assigns a value of 0 when the pixels in the superpixel
are in the same segment, and assigns a larger penalty otherwise, which is maximal when the
pixels are equally split between the two segments.
\begin{figure}
    \centering
    \includegraphics[width=0.32\textwidth]{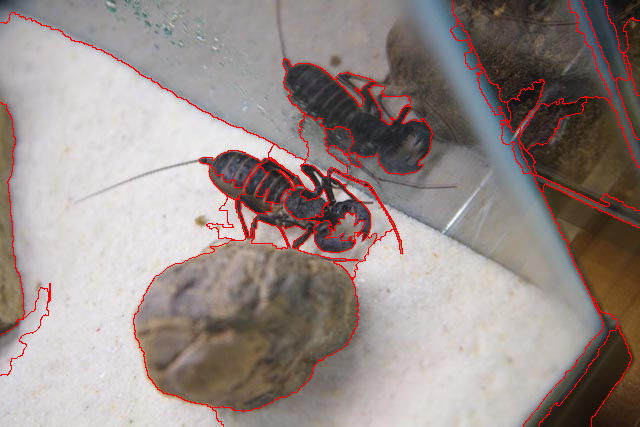}
    \caption{Generated superpixels to be used as attractive higher order potentials for encouraging label consistency.}
    \label{fig:sp}
\end{figure}

\paragraph{Modular functions.} The simplest family of submodular functions are modular functions, which can be seen as the
discrete analogue of
linear functions.
Namely, a function $s:2^V\to\R$ is said to be modular if for all $A\seq V$ it holds that
$s(A) = \sum_{i\in A} s(\{i\})$.
The family of distributions that arise from these functions are exactly the family of completely factorized distributions\footnote{Because we use Gibbs distributions, note that they can not assign zero probabilities.},
because
\[
    P(S) \propto \exp(-s(S)) = \prod_{i\in S} \exp(-s_i).
\]
As evident from their definition, modular functions are uniquely defined through the quantities
$s(\{i\})$ for all $i\in V$. It is very useful to view modular functions as
vectors $\s\in\R^V$ with coordinates $s_i = s(\{i\})$.
\paragraph{Submodular polyhedra.} There are several polyhedra that contain some of these modular functions (in their vectorial representation)
that we will make use of. More specifically, we are interested in the submodular polyhedron $P(F)$ and the
base polytope $B(F)$, which are defined as
\begin{align}
    P(F) &= \{ \s\in\R^V \mid \forall A\seq V \colon s(A) \leq F(A) \}, \\
    B(F) &= P(F)\cap \{ \s\in\R^V \mid s(V) = F(V) \}.
\end{align}
In other words, $P(F)$ is the set of all modular \emph{lower bounds} of the
function $F$, while $B(F)$ adds the further restriction that the bound must be tight
at the ground set $V$.
It can be shown that these polyhedra are not empty and their geometry is also
well understood~\cite{fujishige-book,fbach-ft}. Moreover, what is especially surprising, is that
even though $B(F)$ is defined with exponentially many inequalities, we can optimize linear functions
over it in~$O(|V|\log|V|)$ time with a simple greedy strategy~\cite{edmonds1970submodular}.

\paragraph{MAP estimation and the minimum norm point.} A very natural question that arises for any probabilistic model is that of finding a
MAP configuration, which for log-supermodular distribution amounts to minimizing the
function $F$. This is a problem that has
been studied in much detail and has resulted in numerous approaches. The fastest known
combinatorial algorithm due to~\citet{orlin2009faster} has a bound of $O(n^6 + \tau n^5)$, where $\tau$
is the cost of evaluating the function, and can be prohibitively expensive to run for larger
ground sets.
An algorithm that performs better in practice, but only has a pseudopolynomial running time
guarantee \cite{chakrabarty2014provable}, is the Fujishige-Wolfe
algorithm~\cite{fujishige1980lexicographically}. This method approaches the problem by solving
the following convex program, known as the \emph{minimum norm problem}.
\begin{equation}\label{prob:mn}
    \underset{\s\in B(F)}{\minimize}\, \|\s\|^2.
\end{equation}
One can extract the solution to the submodular minimization problem from the
solution to the above problem by thresholding, which is formalized in the following theorem.
\begin{thm}[\citet{fujishige-book}]\label{thm:fuji}
    If $\s^*$ is the optimal solution to problem~\eqref{prob:mn}, define the following
    sets
        \begin{align*}
            A_- &= \{ v \mid v\in V \textrm{ and } s^*_v < 0 \}, \textrm{and}\\
            A_0 &= \{ v \mid v\in V \textrm{ and } s^*_v \leq 0 \}.
        \end{align*}
    Then $A_-$ and $A_0$ are the unique minimal and maximal minimizers of $F$.
\end{thm}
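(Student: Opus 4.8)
The plan is to establish two facts about the minimum norm point $\s^*$: a cheap lower bound on $F(S)$ valid for every $S$, and a structural fact that the two level sets in question actually attain it. Throughout, call a set $A\seq V$ \emph{tight} if $s^*(A)=F(A)$; since $\s^*\in B(F)$, every set obeys $s^*(A)\le F(A)$, so being tight is the ``equality case''. First I would prove the cheap bound: for arbitrary $S\seq V$, discarding the nonnegative coordinates of $\s^*$ in $S$ and then adjoining \emph{all} the strictly negative coordinates (inside or outside $S$) only decreases the sum, so
\[
  F(S)\;\ge\;s^*(S)\;=\;\sum_{v\in S}s^*_v\;\ge\;\sum_{v\in S:\,s^*_v<0}s^*_v\;\ge\;\sum_{v\in V:\,s^*_v<0}s^*_v\;=\;s^*(A_-)\;=\;s^*(A_0),
\]
the last equality because $A_0\setminus A_-$ contributes only zeros. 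Hence $\min_S F(S)\ge s^*(A_-)$. If moreover equality holds for some $S$, then $S$ is tight and $\sum_{v\in S}s^*_v=\sum_{v\in A_-}s^*_v$; rearranging gives $\sum_{v\in S\setminus A_-}s^*_v=\sum_{v\in A_-\setminus S}s^*_v$, a sum of nonnegative terms equal to a sum of strictly negative terms, so both vanish term by term, forcing $A_-\seq S$ and $S\setminus A_-\seq\{v:s^*_v=0\}\seq A_0$. So any set meeting the bound lies between $A_-$ and $A_0$.

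It then remains to show $A_-$ and $A_0$ are themselves tight, for then they are minimizers of value $s^*(A_-)=\min_S F(S)$, and the previous step shows every minimizer is sandwiched between them, giving minimality, maximality and uniqueness simultaneously. Both $A_-$ and $A_0$ are level sets of $\s^*$, i.e.\ of the form $A=\{v:s^*_v\le\gamma\}$ (take $\gamma=0$ for $A_0$, and $\gamma$ the largest negative coordinate value for $A_-$; the degenerate cases $A=\emptyset$ and $A=V$ are tight trivially). So the crux is: \emph{every level set of the minimum norm point is tight}, and this is where optimality of $\s^*$ for \eqref{prob:mn} enters. I would argue by contradiction. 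Suppose $A=\{v:s^*_v\le\gamma\}$ has $s^*(A)<F(A)$. For $u\in A$, $w\notin A$ we have $s^*_u\le\gamma<s^*_w$, and whenever $\s^*+\epsilon(e_u-e_w)\in B(F)$ (with $e_v$ the $v$-th unit vector) one gets $\|\s^*+\epsilon(e_u-e_w)\|^2=\|\s^*\|^2-2\epsilon(s^*_w-s^*_u)+2\epsilon^2<\|\s^*\|^2$ for small $\epsilon>0$, contradicting optimality. Hence for every such pair the largest feasible $\epsilon$ is $0$; since that quantity equals $\min\{F(T)-s^*(T):u\in T,\ w\notin T\}$, there is a tight set $T_{u,w}$ containing $u$ but not $w$. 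Now using that tight sets of $\s^*$ are closed under union and intersection — this is the one place submodularity of $F$ is used, via $F(T)+F(T')\ge F(T\cup T')+F(T\cap T')$ together with modularity of $s^*$ — the set $\bigcup_{u\in A}\bigcap_{w\notin A}T_{u,w}$ is tight, contains every $u\in A$, and contains no $w\notin A$, hence equals $A$: contradiction. Combining the two steps completes the proof.

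I expect the second step — deducing tightness of the level sets from the quadratic optimality condition, which is essentially the principal-partition structure of $\s^*$ — to be the only substantive part; the first step is bookkeeping with the signs of the coordinates. A secondary thing to watch is handling the boundary level sets $\emptyset$ and $V$ and the possibility $A_-=A_0$ cleanly, but these are immediate from $s^*(\emptyset)=F(\emptyset)=0$ and $s^*(V)=F(V)$.
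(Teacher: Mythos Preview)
The paper does not prove this theorem; it is quoted verbatim as a background result from Fujishige's book, so there is no in-paper proof to compare against. That said, your argument is correct and is essentially the classical one found there (and in Bach's monograph): the exchange perturbation $\s^*\mapsto\s^*+\epsilon(e_u-e_w)$ together with optimality of $\s^*$ forces, for every pair $u,w$ with $s^*_u<s^*_w$, the existence of a tight set separating them; the lattice property of tight sets (the one place submodularity enters) then makes every level set of $\s^*$ tight. Combined with your first-step inequality chain and its equality analysis, this yields precisely the sandwich $A_-\seq S\seq A_0$ for every minimizer, with both endpoints attained.

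One cosmetic remark: in your second step the ``suppose $s^*(A)<F(A)$'' wrapper is never actually used --- the exchange-plus-lattice argument proves \emph{directly} that any nontrivial level set equals a union of intersections of tight sets and is therefore tight; you can drop the contradiction framing. The boundary cases $A\in\{\emptyset,V\}$ and $A_-=A_0$ are handled exactly as you say.
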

 
\makeatletter{}\section{Variational inference with \algo}
The main barrier to performing inference in log-supermodular models is the computation
of the normalizing factor $\Z$, also known as the partition function in the statistical
physics literature. We cannot compute it directly as we have to sum up over all
$S\seq V$, so we have to use approximative techniques. One common approach is to define
an optimization problem over some variational parameter $\q$, so that we can compute the
quantity of interest by optimizing this problem.

We now review the variational approximation technique recently introduced by~\citet{djolonga14variational}.
Their method relies on two main observations: \emph{(i)} modular functions have analytical
log-partition functions and \emph{(ii)} submodular functions can be lower-bounded by modular
functions. The main idea is the following: if it holds that $\forall A\seq V\colon s(A)\leq F(A)$, then
it will certainly be the case that
\[
    \log\sum_{A\seq V} e^{-F(A)} \leq \log\sum_{A\seq V} e^{-s(A)} = \sum_{i\in V} \log(1+e^{-s_i}).
\]
We thus have a family of variational upper bounds on the partition function parametrized by the modular
functions $\s$, over which we can optimize to minimize the right hand side of the inequality.
As shown by~\citet{djolonga14variational} this variational problem can be reduced to the
following convex separable optimization problem over the base polytope
\begin{equation}\label{prob:varinf}
    \underset{\s\in B(F)}{\minimize} \sum_{i\in V} \log(1+\exp(-s_i)).
\end{equation}
This problem -- \algo\ -- can be then solved using the divide-and-conquer
algorithm~\cite{fbach-ft,jegelka2013reflection} by solving at
most~$O(\min\{ |V|, \log\frac{1}{\epsilon} \})$ MAP problems,
where $\epsilon$ is the tolerated error on the marginals. It can be also approximately
solved using the Frank-Wolfe algorithm at a convergence rate of $O(1/k)$. While these results establish tractability of the variational approach, in general solving even one MAP problem requires submodular minimization -- an expensive task, and repeated solution may be too costly. Convergence of the Frank-Wolfe method is empirically slow. 

\section{\algo $\equiv$ Minimum norm point.} Our first main contribution is the following, perhaps surprising, result:
\begin{thm}\label{thm:equiv}
    Problems~\eqref{prob:varinf} and~\eqref{prob:mn} have the same solution.
\end{thm}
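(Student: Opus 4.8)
\emph{Proof sketch (plan).}
Both objectives, $\|\s\|^2$ and $\sum_{i\in V}\log(1+e^{-s_i})$, are continuous and strictly convex on the nonempty compact polytope $B(F)$, so each of problems~\eqref{prob:mn} and~\eqref{prob:varinf} has a \emph{unique} optimum. Denote by $\s^*$ the optimum of the minimum-norm problem~\eqref{prob:mn}. The plan is to verify that this same $\s^*$ satisfies the first-order optimality condition for the variational problem~\eqref{prob:varinf}; uniqueness then forces the two optima to coincide, which is precisely the claim.

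First I would invoke the standard first-order condition: for a differentiable convex $f$ and a convex set $C$, a point $\s^*\in C$ minimizes $f$ over $C$ iff $\s^*\in\argmin_{\s\in C}\langle\nabla f(\s^*),\s\rangle$. For $f(\s)=\|\s\|^2$ we have $\nabla f(\s^*)=2\s^*$, so optimality of $\s^*$ for~\eqref{prob:mn} is exactly the statement that $\s^*$ minimizes the linear functional $\s\mapsto\langle\s^*,\s\rangle$ over $B(F)$.

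Next comes the structural fact that makes the reduction work: by Edmonds' greedy algorithm for linear optimization over the base polytope~\cite{edmonds1970submodular}, the minimizer set $\argmin_{\s\in B(F)}\langle\w,\s\rangle$ is a face of $B(F)$ that depends on $\w$ \emph{only through the ordered partition of $V$ that $\w$ induces} --- that is, the grouping of coordinates into level sets $\{i : w_i = c\}$ together with the ordering of these sets by the value $c$ --- since the greedy procedure only ever consults the sorted order of the entries of $\w$. Now the variational objective $g(\s)=\sum_{i\in V}\log(1+e^{-s_i})$ has gradient $\nabla g(\s)_i=\psi(s_i)$ with $\psi(t)=-1/(1+e^{t})$, which is strictly increasing in $t$. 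Hence the vector $\nabla g(\s^*)$ induces exactly the same ordered partition of $V$ as $\s^*$ itself (equal coordinates stay equal, strictly smaller stay strictly smaller). By the structural fact, $\argmin_{\s\in B(F)}\langle\nabla g(\s^*),\s\rangle=\argmin_{\s\in B(F)}\langle\s^*,\s\rangle$, whose right-hand side contains $\s^*$ by the previous paragraph. Thus $\s^*$ satisfies the first-order condition for $g$ on $B(F)$, so by convexity it is the unique solution of~\eqref{prob:varinf}, giving the claimed equality of solutions.

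The step I expect to require the most care is this structural fact together with the treatment of ties: since $\s^*$ need not be a vertex of $B(F)$, one must argue at the level of optimal faces, and make precise that applying a common strictly increasing function coordinatewise leaves the induced ordered partition --- and therefore the optimal face --- unchanged. Everything else is routine convex analysis. Equivalently, the argument can be packaged as a stand-alone lemma: for any strictly increasing $\psi\colon\R\to\R$ and any antiderivative $H$ of $\psi$, the minimizer of $\sum_{i\in V}H(s_i)$ over $B(F)$ is independent of the choice of $\psi$; taking $\psi(t)=t$ recovers (a positive rescaling of)~\eqref{prob:mn} while $\psi(t)=-1/(1+e^{t})$ recovers~\eqref{prob:varinf}.
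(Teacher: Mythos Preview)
Your argument is correct and is essentially the same as the paper's: the paper proves a slightly more general weighted/shifted version and, instead of re-deriving the ``structural fact'' via the first-order condition and Edmonds' greedy algorithm, directly invokes \cite[Theorem~8.1]{fujishige-book}, which is exactly the stand-alone lemma you formulate at the end (separable strictly convex objectives whose coordinatewise derivatives differ by a strictly increasing transformation share the same minimizer on $B(F)$). Your explicit treatment of ties via ordered partitions and optimal faces is the right level of care for that citation.
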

The proof of this theorem (given in the appendix) crucially depends on the peculiar characteristics of the base polytope. Similar results have been shown (for other objectives) by~\citet{nagano2012equivalence}. This theorem has three immediate, extremely important consequences. First, since the minimum-norm point approach is often the method of choice for submodular minimization anyway, this insight reduces the cost from solving many MAP problems to a {\em single} minimum norm point problem, which leads to substantial performance gains -- a factor of $O(|V|)$ if we seek the optimal variational solution!
Secondly, given this equivalence and Theorem~\ref{thm:fuji}, we can immediately see that
we can in fact extract the MAP solution by thresholding the marginals at $1/2$.
\begin{coro}\label{coro:map}
    We can extract the unique minimal and maximal MAP solutions by thresholding the
    optimal marginal vector at $1/2$.
\end{coro}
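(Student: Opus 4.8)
}
The plan is to chain the statement back to Theorem~\ref{thm:fuji} through the equivalence of Theorem~\ref{thm:equiv}. First I would pin down what ``the optimal marginal vector'' is. Given the optimal modular parameter $\s^*$, the variational distribution produced by \algo\ is the fully factorized measure $Q(S)\propto\prod_{i\in S}e^{-s^*_i}$, whose marginal probability that $i\in S$ equals $q_i=\Pr_Q(i\in S)=\frac{e^{-s^*_i}}{1+e^{-s^*_i}}=\frac{1}{1+e^{s^*_i}}$, i.e.\ $\q=\sigma(-\s^*)$ coordinatewise. The key elementary observation is then that thresholding $\q$ at $1/2$ just reads off the sign pattern of $\s^*$: the map $t\mapsto 1/(1+e^{t})$ is strictly decreasing and equals $1/2$ at $t=0$, so $q_i>1/2\iff s^*_i<0$ and $q_i\ge 1/2\iff s^*_i\le 0$.

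Next I would invoke Theorem~\ref{thm:equiv}, which tells us that this same $\s^*$ is also the optimal solution of the minimum norm problem~\eqref{prob:mn}. Applying Theorem~\ref{thm:fuji} to it, the sets $A_-=\{i:s^*_i<0\}$ and $A_0=\{i:s^*_i\le 0\}$ are respectively the unique minimal and maximal minimizers of $F$. By the previous paragraph these are exactly $\{i:q_i>1/2\}$ and $\{i:q_i\ge 1/2\}$, the two natural ways of rounding $\q$ at $1/2$.

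Finally, since a MAP configuration of $P(S)\propto e^{-F(S)}$ is by definition a minimizer of $F$ over $2^V$, the minimal and maximal minimizers of $F$ are precisely the minimal and maximal MAP configurations, which completes the argument.

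I do not expect a genuine obstacle here; the proof is essentially a bookkeeping chain. The only points requiring a little care are \emph{(i)} confirming that the quantity \algo\ reports as the marginal is indeed $1/(1+e^{s^*_i})$ rather than some other monotone reparametrization, and \emph{(ii)} matching the strict versus non-strict threshold with the minimal versus maximal minimizer in Theorem~\ref{thm:fuji}, so that the boundary cases $q_i=1/2$ (equivalently $s^*_i=0$) get assigned to $A_0$ and not to $A_-$.
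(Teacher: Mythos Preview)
Your proposal is correct and matches the paper's approach exactly: the paper does not give a separate proof but simply states that the corollary follows ``given this equivalence and Theorem~\ref{thm:fuji},'' which is precisely the chain you spell out (identify the marginal as $\sigma(-s^*_i)$, observe that thresholding at $1/2$ reads off the sign of $s^*_i$, then apply Theorems~\ref{thm:equiv} and~\ref{thm:fuji}). Your explicit handling of the strict versus non-strict threshold for $A_-$ versus $A_0$ is a useful clarification the paper leaves implicit.
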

Thus, the \algo approach results in the {\em exact} MAP solution in addition to approximate marginals and an upper bound on the partition function.
Thirdly, since the minimum norm point problem is well studied, faster algorithms for important special cases become available. In particular, in \S 6, we demonstrate how certain types of log-supermodular distributions enable extremely efficient parallel inference.
 
\makeatletter{}\section{The divergence minimization perspective}\label{sec:divergence}
\looseness -1 The \algo approach attacks the partition function directly. One can of course employ the factorized distribution parametrized by the minimizer $\s^*$ of the upper bound to obtain approximate marginals. However, it is not immediately clear what properties the resulting distribution has, apart from agreeing on the mode
(as shown by Corollary~\ref{coro:map}).
To this end, we turn to the theory of divergence measures as that will enable us to
understand the solutions preferred by the method. Divergence measures are functions
$\Div{}{P}{Q}$ of two probability distributions $P$ and $Q$ that quantify the degree of
dissimilarity between the arguments. Once we have picked a divergence measure $D$, we are
interesting in minimizing $\Div{}{P}{Q}$ among some set of approximative distributions $Q\in\Q$.
The family which is of particular interest to us is that of completely factorized
distributions that assign positive probabilities, which we now formally define.
\newpage
\begin{defn}
We define the set $\Q$ of completely factorized positive distributions as
\[
    \Q = \{ Q \mid Q(S) \propto \prod_{i\in S}\exp(-q_i) \; \textrm{ for some } \q\in\R^V \}.
\]
\end{defn}
There are many choices for a divergence measure, the most prominent examples being the KL-divergence
and the family of R\'enyi divergences~\cite{renyi}.
Of particular interest for our analysis  is the special infinite order of the R\'enyi divergence,
defined as follows:
\begin{defn}[\citet{van2012r}]
Define the R\'enyi divergence of infinite order between $P(S)$ and $Q(S)$
\begin{equation}
    \Div{\infty}{P}{Q} = \log\sup_{S\seq V} \frac{P(S)}{Q(S)}.
\end{equation}
\end{defn}
In the terminology of~\citet{minka2005divergence} we can see that the $D_{\infty}$ divergence is
\emph{inclusive}, which means that it would try to ``cover'' as much as possible from the distribution: The variational approximation is conservative in the sense that it attempts to spread mass over all sets that achieve substantial mass under the true distribution.
As we now show, it turns out that when we minimize this divergence for log-supermodular distributions
we can focus our attention only on some specific factorized distributions.
\begin{lemma}\label{lem:gb}
When $P$ is log-supermodular, to solve $\minimize_{Q\in\Q} \Div{\infty}{P}{Q}$
we have to only optimize over modular functions $q$ that are global lower bounds of $F$.
\end{lemma}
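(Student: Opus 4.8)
The plan is to reduce the divergence-minimization problem to an explicit optimization over $\q\in\R^V$ and then to argue that its optimum can never lie strictly outside the submodular polyhedron $P(F)$. First I would substitute the Gibbs forms $P(S)=\tfrac1\Z e^{-F(S)}$ and $Q(S)=\tfrac1{\Z_Q}e^{-q(S)}$, where $\Z_Q=\prod_{i\in V}(1+e^{-q_i})$ is the normalizer of the factorized $Q$, into the definition $\Div{\infty}{P}{Q}=\log\sup_{S\seq V}\tfrac{P(S)}{Q(S)}$. Since $\Z,\Z_Q>0$ this yields
\[
\Div{\infty}{P}{Q} \;=\; \sum_{i\in V}\log(1+e^{-q_i}) \;+\; \max_{S\seq V}\bigl(q(S)-F(S)\bigr) \;-\; \log\Z .
\]
Write $g(\q)$ for the sum of the first two terms and $h(\q)$ for the $\max$ term; the last term is a constant, so minimizing $\Div{\infty}{P}{Q}$ over $\Q$ is equivalent to minimizing $g$ over $\R^V$. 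Since a modular $q$ being a global lower bound of $F$ means $q(A)\le F(A)$ for every $A\seq V$, i.e.\ $\q\in P(F)$, and since this is equivalent to $h(\q)=0$ (because $h(\q)\ge q(\emptyset)-F(\emptyset)=0$ always), it suffices to prove that every minimizer of $g$ satisfies $h(\q)=0$.

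The structural fact I would exploit is that $S\mapsto q(S)-F(S)$ is \emph{supermodular} (a modular function minus a submodular one), so its set of maximizers forms a lattice closed under intersection; in particular it has a unique minimal element $S^\star$. Suppose for contradiction that $\q$ minimizes $g$ but $h(\q)=M>0$. Then $\emptyset$ is not a maximizer, so $S^\star\neq\emptyset$, and I fix some $i\in S^\star$. I then compare $g$ at $\q$ with $g$ at the perturbed point $\q'=\q-\delta\,\mathbf{1}_{\{i\}}$ for small $\delta>0$. Only the $i$-th entropy-like term changes, increasing by $\log(1+e^{-q_i+\delta})-\log(1+e^{-q_i})$, which is strictly less than $\delta$ because the derivative of $t\mapsto\log(1+e^{-q_i+t})$ is a sigmoid and hence lies in $(0,1)$. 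For the $\max$ term I would show, using minimality of $S^\star$ and the lattice property, that the largest value of $q(S)-F(S)$ over sets $S$ with $i\notin S$ is some $M'<M$ (otherwise intersecting such a maximizer with $S^\star$ would produce a strictly smaller maximizer); consequently, for every $\delta<M-M'$ the maximizer of $q'(S)-F(S)$ still contains $i$, so $h(\q')=M-\delta$. Adding the two contributions gives $g(\q')-g(\q)<\delta-\delta=0$, contradicting optimality of $\q$. Hence any minimizer has $h(\q)=0$, i.e.\ lies in $P(F)$.

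To conclude I would check that a minimizer actually exists: $g$ is convex and coercive, since along any sequence with $\|\q\|\to\infty$ either some coordinate tends to $-\infty$, forcing $\sum_i\log(1+e^{-q_i})\to\infty$, or some coordinate $q_i$ tends to $+\infty$, forcing $h(\q)\ge q_i-F(\{i\})\to\infty$. Combining this with the previous paragraph, $\min_{Q\in\Q}\Div{\infty}{P}{Q}=\min_{\q\in P(F)}g(\q)-\log\Z$, which is exactly the claim that the optimization may be restricted to modular functions $q$ that are global lower bounds of $F$. The main obstacle I expect is making the perturbation step airtight — guaranteeing that lowering a single coordinate strictly decreases the $\max$ term — and this is precisely where supermodularity of $q(S)-F(S)$ and the choice of $i$ inside the minimal maximizer are indispensable; the remaining computations are routine bookkeeping.
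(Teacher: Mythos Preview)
Your proof is correct, and it takes a genuinely different route from the paper's. The paper reformulates the problem as minimizing $\sum_i\log(1+e^{-q_i})+t$ subject to $q(A)\le F(A)+t$ for all $A$, then shows that the optimum is attained at $t=0$ by comparing, for each fixed $t=\beta\ge 0$, the value $\mathrm{OPT}_\beta$ against $\mathrm{OPT}_0$. That comparison goes through an auxiliary submodular function $F_\beta$ (equal to $F+\beta$ on nonempty sets) and the Fenchel-dual representation $\L^*(F)=\sup_{\p\in[0,1]^V}\Ent[\p]-f(\p)$ via the Lov\'asz extension: one line of inequalities then gives $\mathrm{OPT}_0\le\mathrm{OPT}_\beta$. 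Your argument instead stays entirely on the primal side: you exploit that $S\mapsto q(S)-F(S)$ is supermodular so its maximizers form a lattice, pick an element $i$ in the minimal maximizer, and show that decreasing $q_i$ strictly improves the objective whenever $h(\q)>0$. This is more elementary and self-contained---no Lov\'asz extension, no duality, no auxiliary $F_\beta$---and it yields the slightly sharper statement that \emph{every} minimizer lies in $P(F)$. The paper's approach, on the other hand, plugs directly into the surrounding machinery: once $t=0$ is established, Theorem~\ref{thm:div-eq} (equivalence with problem~\eqref{prob:varinf}) is immediate, whereas with your argument one still needs the separate reduction from $P(F)$ to $B(F)$. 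Your coercivity argument is slightly informal but easily tightened: bounded $g$ forces each $q_i$ bounded below (from the softplus terms) and bounded above (from $h(\q)\ge q_i-F(\{i\})$), so sublevel sets are compact.
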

What this lemma essentially says, is that a minimizing distribution $\q^*$ can be always
found in $P(F)$. This result also implies the central result of this section, that
the variational approach we have considered essentially minimizes the infinite divergence.
\begin{thm}\label{thm:div-eq}
When $P$ is log-supermodular, the problem
$\minimize_{Q\in\Q} \Div{\infty}{P}{Q}$ is equivalent to problem~\eqref{prob:varinf}.
\end{thm}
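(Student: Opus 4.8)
The plan is to evaluate $\Div{\infty}{P}{Q}$ explicitly on completely factorized distributions and then use Lemma~\ref{lem:gb} to restrict the feasible set, so that the divergence reduces, up to an additive constant, to the objective of problem~\eqref{prob:varinf}. First I would parametrize $Q\in\Q$ by its vector $\q\in\R^V$, record that its normalizer is $\Z_Q=\prod_{i\in V}(1+e^{-q_i})$, and compute, for every $S\seq V$,
\[
    \frac{P(S)}{Q(S)} = \frac{\Z_Q}{\Z}\exp\bigl(q(S)-F(S)\bigr),
\]
so that taking the log of the supremum over $S$ gives
\[
    \Div{\infty}{P}{Q} = \sum_{i\in V}\log(1+e^{-q_i}) - \log\Z + \sup_{S\seq V}\bigl(q(S)-F(S)\bigr).
\]

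Next I would invoke Lemma~\ref{lem:gb} to reduce the minimization to $\q\in P(F)$. For such $\q$ we have $q(S)\le F(S)$ for all $S$, hence $\sup_{S}(q(S)-F(S))\le 0$; and because $q(\emptyset)=F(\emptyset)=0$ this supremum is attained at $S=\emptyset$ and therefore equals exactly $0$. The divergence thus collapses to $\sum_{i\in V}\log(1+e^{-q_i})-\log\Z$, and since $\log\Z$ does not depend on $\q$, the problem $\minimize_{Q\in\Q}\Div{\infty}{P}{Q}$ becomes $\minimize_{\q\in P(F)}\sum_{i\in V}\log(1+e^{-q_i})$.

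Finally I would enlarge the constraint set from $P(F)$ to $B(F)$. Each term $\log(1+e^{-s_i})$ is strictly decreasing in $s_i$, and it is a classical property of submodular polyhedra (recalled in \S\ref{sec:submodularity}) that $P(F)$ is the downward closure of $B(F)$, i.e.\ every point of $P(F)$ is dominated componentwise by some point of $B(F)$; pushing a candidate solution up to such a point cannot increase the objective, so the minimum over $P(F)$ equals the minimum over $B(F)$, which is precisely problem~\eqref{prob:varinf}. Following the chain of equalities then shows the two problems share the same optimal solution.

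I expect the only real subtlety to be the handling of the $\sup_{S}(q(S)-F(S))$ term: one must note both that Lemma~\ref{lem:gb} legitimizes restricting to $P(F)$ and that on $P(F)$ this term vanishes identically (being maximized at $\emptyset$), which is exactly what decouples the divergence into the separable partition-function surrogate. The remaining pieces — the bookkeeping with $\Z_Q$ and the $P(F)$-to-$B(F)$ domination argument — are routine given the background recalled in \S\ref{sec:submodularity}.
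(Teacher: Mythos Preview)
Your proposal is correct and follows the derivation the paper signals in the main text: expand the divergence, invoke Lemma~\ref{lem:gb} to restrict to $P(F)$ so the $\sup$ term collapses to zero, then pass from $P(F)$ to $B(F)$ by coordinate-wise monotonicity. The paper's appendix takes a slightly different organizational route---it introduces a slack variable $t$ for the $\sup$ term, and proves via Fenchel duality and an auxiliary function $F_\beta$ that the optimum is attained at $t=0$, thereby establishing Lemma~\ref{lem:gb} and Theorem~\ref{thm:div-eq} in one stroke---but once Lemma~\ref{lem:gb} is granted, your argument is exactly the natural one and the paper uses the same $P(F)\to B(F)$ monotonicity step implicitly (in the identification $\OPT_\beta=\L^*(F_\beta)+\beta$). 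One minor caveat: the downward-closure property you cite is standard but is not actually spelled out in \S\ref{sec:submodularity}, so you should justify it directly rather than by reference.
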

This theorem has the following immediate consequence:
\begin{coro}\looseness -1 For log-supermodular models, problem $\minimize_{Q\in\Q} \Div{\infty}{P}{Q}$ is
polynomial-time tractable via $O(|V|)$ MAP (submodular minimization) problems.
\end{coro}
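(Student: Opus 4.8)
The plan is to chain together the equivalences already established in the paper and then invoke the known complexity of the divide-and-conquer algorithm for separable convex minimization over the base polytope. First I would apply Theorem~\ref{thm:div-eq} to replace the divergence-minimization problem $\minimize_{Q\in\Q} \Div{\infty}{P}{Q}$ by the equivalent variational problem~\eqref{prob:varinf}, i.e.\ the separable convex program $\minimize_{\s\in B(F)}\sum_{i\in V}\log(1+\exp(-s_i))$. Concretely, I would check that ``equivalent'' is meant in the strong, constructive sense: from an (approximate) minimizer $\s^\star$ of~\eqref{prob:varinf} one reads off $\q^\star=\s^\star$ and the induced factorized $Q^\star\in\Q$, and the two objective values agree up to an explicit additive constant, so that an $\epsilon$-accurate solution of one yields an $\epsilon$-accurate solution of the other.

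Second, I would recall the structure of~\eqref{prob:varinf}: it minimizes a strictly convex, separable function over the base polytope $B(F)$. For such problems the divide-and-conquer algorithm of~\citet{fbach-ft,jegelka2013reflection} applies; it recursively partitions the ground set, and each recursive call requires the solution of a \emph{single} submodular minimization (MAP) problem on a subset of $V$. The recursion performs at most $O(\min\{ |V|, \log\frac{1}{\epsilon} \})$ such MAP solves in total, hence $O(|V|)$ in the worst case. Since each submodular minimization runs in polynomial time (e.g.\ via~\citet{orlin2009faster}, or in practice via the minimum-norm approach), the overall procedure is polynomial, which is exactly the claimed statement.

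I expect the only subtle point -- and therefore the thing to be most careful about -- is the bookkeeping in the first step: one must make sure the reduction behind Theorem~\ref{thm:div-eq} is value-preserving and effectively invertible, not merely an assertion that the two problems share optimizers, so that tractability genuinely transfers across the equivalence. The combinatorial bound in the second step is then a direct citation of the analysis of divide-and-conquer for separable convex minimization over $B(F)$, with nothing new to prove. As a remark I would add that, by instead combining Theorem~\ref{thm:div-eq} with Theorem~\ref{thm:equiv}, the same conclusion follows by solving a single minimum-norm-point problem~\eqref{prob:mn}, trading the $O(|V|)$ MAP solves for one call to a (worst-case pseudopolynomial) minimum-norm solver.
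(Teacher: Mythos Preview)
Your proposal is correct and follows essentially the same route as the paper: the corollary is treated there as an immediate consequence of Theorem~\ref{thm:div-eq} together with the earlier-stated fact that problem~\eqref{prob:varinf} is solved by the divide-and-conquer algorithm using $O(\min\{|V|,\log\frac{1}{\epsilon}\})$ MAP (submodular minimization) calls. Your additional care about the equivalence being constructive is reasonable but not strictly needed, since the appendix proof of Theorem~\ref{thm:div-eq} shows the optimum of~\eqref{prob:infty} is attained at $t=0$, making the reduction to~\eqref{prob:varinf} value-preserving with $\q^\star=\s^\star$.
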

Hence, any log-supermodular distribution has the property that we can find the
closest factorized distribution to it w.r.t.~this specific divergence in polynomial time,
irrespective of whether the distribution factorizes into smaller factors or not.
We would like to point out that the above criterion does not necessarily hold in general
for non-log-supermodular distributions, which we formally show.
\begin{lemma}\label{lem:counterexample}
\Cref{lem:gb} does not hold for general point processes. Specifically, there exists
a log-submodular counter example.
\end{lemma}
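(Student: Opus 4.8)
The plan is to refute \Cref{lem:gb} by exhibiting a concrete small \emph{log-submodular} measure $P(S)=\tfrac1\Z\exp(-F(S))$ — i.e.\ with $F$ \emph{super}modular and $F(\emptyset)=0$ — for which the minimizer of $\Div{\infty}{P}{Q}$ over $\Q$ provably lies outside $P(F)$. The starting point is the identity, valid for any $Q\in\Q$ with parameter $\q$ (so that $Q(S)\propto\prod_{i\in S}e^{-q_i}$ with normalizer $\prod_{i\in V}(1+e^{-q_i})$),
\[
\Div{\infty}{P}{Q}=\log\frac{\prod_{i\in V}(1+e^{-q_i})}{\Z}+\max_{S\seq V}\bigl(q(S)-F(S)\bigr),
\]
which follows immediately from $P(S)/Q(S)=\tfrac{1}{\Z}\prod_{i}(1+e^{-q_i})\cdot e^{q(S)-F(S)}$.

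First I would record the structural fact that makes the argument go through: when $F$ is supermodular one has $F(S)\ge\sum_{i\in S}F(\{i\})$ for every $S$ (induct on $|S|$, using $F(\emptyset)=0$), so every constraint $q(S)\le F(S)$ defining $P(F)$ is implied by the singleton constraints and $P(F)=\{\q:q_i\le F(\{i\})\ \forall i\in V\}$ is just a box. Hence for $\q\in P(F)$ the $\max$ term vanishes (it is attained at $S=\emptyset$), so $\Div{\infty}{P}{Q}=\sum_i\log(1+e^{-q_i})-\log\Z$, which is coordinatewise decreasing; the best approximation inside $P(F)$ is therefore $q^\circ_i=F(\{i\})$, with value $\sum_i\log\bigl(1+e^{-F(\{i\})}\bigr)-\log\Z$.

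Next I would take $V=\{1,2,3\}$ and $F(S)=g(|S|)$ for a convex $g$ with $g(0)=g(1)=0$, $g(2)=1$, $g(3)=3$; convexity of $g$ makes $F$ strictly supermodular, and $F(\{i\})=0$ gives $q^\circ=\mathbf0$ and a $P(F)$-optimal value of $3\log2-\log\Z$. I then perturb outward along $\mathbf 1$: set $\q^\star=(\epsilon,\epsilon,\epsilon)\notin P(F)$ for small $\epsilon>0$. For $0<\epsilon<\tfrac12$ the maximizing set in $\max_S(q^\star(S)-F(S))=\max\{0,\epsilon,2\epsilon-1,3\epsilon-3\}$ is a singleton, giving value $\epsilon$, so $\Div{\infty}{P}{Q^\star}=3\log(1+e^{-\epsilon})+\epsilon-\log\Z$. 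Since $\tfrac{d}{d\epsilon}\bigl[3\log(1+e^{-\epsilon})+\epsilon\bigr]\big|_{\epsilon=0}=-\tfrac32+1=-\tfrac12<0$, for all sufficiently small $\epsilon>0$ this is strictly below $3\log2-\log\Z$. Thus $\min_{Q\in\Q}\Div{\infty}{P}{Q}$ is strictly smaller than its minimum over $P(F)$, so the optimal $Q$ is not a global modular lower bound of $F$ — exactly the failure claimed in \Cref{lem:counterexample}.

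The one thing to get right is the trade-off in the last step: the perturbation gains $\sum_i \tfrac{1}{1+e^{F(\{i\})}}\,\epsilon+o(\epsilon)$ from the log-partition term but pays $\epsilon$ from the $\max$ term, so the construction works precisely when $\sum_i\tfrac{1}{1+e^{F(\{i\})}}>1$; with three variables and $F(\{i\})=0$ this sum is $\tfrac32$, which explains why the counterexample needs at least three elements and why the analogous two-element attempt has its optimum at $q^\circ$. Everything else — supermodularity of $F$, the box description of $P(F)$, and the elementary inequality $3\log(1+e^{-\epsilon})+\epsilon<3\log2$ for small $\epsilon>0$ — is routine and I would not belabor it.
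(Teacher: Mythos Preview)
Your argument is correct and is a genuinely different (and in some ways cleaner) route than the paper's. The paper exhibits a two-element supermodular example $F(\emptyset)=0$, $F(\{1\})=-20$, $F(\{2\})=-8$, $F(\{1,2\})=-16$, bounds the $t=0$ optimum of problem~\eqref{prob:infty} from below via a Lagrange dual certificate, and then beats that bound with an explicit point at $t=1$. You instead exploit the structural observation that for supermodular $F$ the polyhedron $P(F)$ collapses to the box $\{q_i\le F(\{i\})\}$, which lets you compute the constrained optimum \emph{exactly} and then improve it with a first-order perturbation outside the box. Your approach avoids duality entirely and makes transparent the mechanism behind the failure --- the trade-off condition $\sum_i (1+e^{F(\{i\})})^{-1}>1$ --- which is nice.

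One minor overstatement: your side remark that ``the counterexample needs at least three elements'' is only true of your particular symmetric construction with $F(\{i\})=0$; the paper's two-element example has $F(\{i\})<0$, so your trade-off sum is $\approx 2>1$ and the same perturbation idea would already work there. This does not affect the validity of your proof, only the parenthetical commentary.
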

The proofs of all claims are provided in the supplemental material.
 
\makeatletter{}\section{Parallel inference for decomposable models}
Very often the submodular function $F$ has structure that one can exploit to procure faster
inference algorithms. In particular, the function often {\em decomposes}, i.e.,
can be written as a sum of (simpler) functions as
\[
    F(S) = \sum_{i=1}^R F_i(S\cap V_i),
\]
where $F_i : 2^{V_i}\to\R$ are submodular functions with ground sets $V_i$.
This setting has been considered, e.g., by~\citet{stobbe10efficient} and~\citet{jegelka2013reflection}.
The decomposition implies that the corresponding distribution factorizes as follows
\begin{equation}
    P(S) \propto \prod_{i=1}^R \exp(-F_i(S\cap V_i)).
\end{equation}
In fact, the examples we discussed in \S\ref{sec:submodularity} both have this form, factorizing either into pairwise potentials or into the potentials defined by the superpixels. Such models can be naturally represented via a factor graph $G$ that has as nodes the union
of the ground sets $V_i$ and the factors $F_1,\ldots,F_R$. We then add edges $E$ in a bipartite
way by connecting each factor $F_i$ to the elements $V_i$ that participate in it
(e.g. $F_i$ is connected to $v$ iff $v\in V_i$). For any node $w$ in the graph (either
a factor, or variable node), we will denote its neighbors by $\delta(w)$. 

When the function enjoys such a decomposition, the base polytope can be written as the
Minkowski sum of the base polytopes of the summands, or formally
\footnote{If $v\notin V_i$, then the elements from $B(F_i)$ are treated as having a zero
for that coordinate.}
\[
    B(F) = \sum_{i=1}^R B(F_i).
\]
Hence, the minimum norm problem~\eqref{prob:mn} that we are interested in can be rewritten
as the following problem.
\[
    \underset{\mathbf q_i\in B(F_i)}{\minimize}\sum_{v\in V} (\sum_{F_i\in\delta(v)} q_{i,v})^2.
\]
In the following, we discuss two natural message passing algorithms exploiting this structure.

\paragraph{Expectation propagation.} 
A very natural approach would be to perform block coordinate descent one factor at a time.
If we look through the lens of divergence measures, as introduced in \S 5,
we can make a clear connection to (a variant of) {\em expectation propagation}\footnote{Typically, expectation propagation is defined w.r.t.~the KL-divergence.}, the message passing approach of~\citet{minka2005divergence} specialized to
minimizing the divergence $\Div{\infty}{P}{Q}$, which we now briefly describe.
The main idea is to approximate each factor $\exp(-F_i(S\cap V_i))$ with a completely
factorized distribution $Q_i(S)\propto\exp(-q_i(S))$, such that the product
$\prod_{i=1}^R Q_i$ is a good approximation to the true distribution in terms of the given
divergence. Then, we optimize iteratively using the following procedure.
\begin{enumerate}
    \item Pick a factor $F_i$.
    \item Replace the other factors $F_j$ for $j\neq i$ with their approximations
        $Q_j$ and minimize
        \[
            \Div{\infty}{\frac{1}{\Z_i}\exp(-F_i(S))\prod_{j\neq i}Q_j}{\prod_{j=1}^R Q_j}
        \]
        over the factorized approximation $Q_i$.
\end{enumerate}
In other words, we choose a factor and minimize the infinite divergence for that factor, but instead of
using the true factors $\exp(-F_j(S))$ for $j\neq i$, we replace them with their current
modular approximations $Q_j$.

\paragraph{A parallel approach.}
One downside of the approach presented above is that it has to be applied {\em sequentially}, i.e., one factor has to be updated at a time to ensure convergence.  An alternative is to apply an approach used by~\citet{jegelka2013reflection}, which
allows to perform message passing {\em in parallel} without losing the convergence guarantees.
\citet{jegelka2013reflection} assume that all factors depend on {\em all} variables
(i.e.\ $V_i=V$). In the following, we generalize their setting in order to allow $V_i\neq V$. By changing the dual problem they consider (shown in detail in the appendix) we arrive at a form that is more natural to our setting and can be seen as performing message passing in the factor graph.
To describe the messages, we have to define the following pair of norms that arise from the
structure of the factor graph.

\begin{defn}\label{defn:norms}
    For any $\x_S\in\mathbb{R}^S$, where $S\seq V$, we define the
    following pair of norms.
\begin{equation*}
    \|\mathbf x_S\|_G^2 = \sum_{v\in S} \frac{1}{|\delta(v)|} x_v^2,
                  \;\textrm{and}\;
    \|\mathbf x_S\|_{G*}^2 = \sum_{v\in S} |\delta(v)| x_v^2.
\end{equation*}
\end{defn}

The messages from variables to factors are simple sums, similar to those in standard
belief propagation
\[
    \mu^{t+1}_{v\to F_i} = \frac{1}{|\delta(v)|}\sum_{F_j\in\delta(v)} \mu^t_{F_j\to v}.
\]
The factors always keep some vector on their base polyhedron, which at iteration
$t$ will be denoted by $\q^t_i\in B(F_i)$.
Then, based on the incoming messages, they update this vector by solving a convex problem, which is much cheaper than the exhaustive computation one has to
do for belief propagation (which is {\em exponential} in the factor size). We will denote the message sent from node $u$ to node $w$
at iteration $t$ by $\mu^t_{u\to w}$. If $\m^t_i\in\R^{V_i}$ is the \emph{vector} of messages
received at iteration $t$ at node $F_i$ (one message from each $v\in V_i$), then
the factor solves a projection problem parametrized by $(\m^t_i, \x^t_i)$, whose
solution is assigned to $\x^{t+1}_i$. Written formally, we have
\[
    \q^{t+1}_i = \underset{\q_i\in B(F_i)}{\mathrm{argmin}}\| \q_i - (\q_i^t - \m_i^t) \|^2_{G*}.
\]
As this is a convex separable problem on the base polytope, it can be solved for
example using the divide-and-conquer algorithm~\cite{fbach-ft}.
Having solved this problem, the factor sends the following messages to its
neighbours
\[
    \mu^{t+1}_{F_i\to v} = \q^{t+1}_v.
\]
Stated differently, it will send to every variable node $v$ the coordinate of the
stored vector corresponding to that variable.
At every iteration $t$ we can extract the current factorized approximation to the
full distribution by simply  considering the incoming messages at the variable nodes.
Specifically, the approximation $\q^t$ at time step has in the $v$-th coordinate
the sum of incoming messages at the node $v$, or formally
\[
    q^t_v = \sum_{F_i\in\delta(F_j)} \mu^t_{F_i\to v}.
\]
\looseness -1 Because the algorithm can be seen as performing block coordinate descent on a specific
problem (discussed in the appendix), the message passing algorithm described above possesses strong convergence
guarantees that depend on the structure of the factor graph. These guarantees even hold if all messages from nodes to factors, and all messages from factors to nodes are each computed {\em in parallel}.
An important quantity that appears in the convergence rate is the maximal variable connectivity
$\Delta_V = \max_{v\in V} |\delta(v)|$.
Based on recent new results by~\citet{nishihara2014convergence} on block coordinate descent for a similar dual (assuming that all factors depend on all variables, as considered by~\citet{jegelka2013reflection}), we extend their
analysis to obtain a linear convergence rate for our message passing scheme.
\begin{thm}[Extension of~\citet{nishihara2014convergence}]\label{thm:linconv}
    If the graph is $\Delta_V$-regular, s.t.\ every variable appears in exactly
    $\Delta_V$ factors, then the message passing algorithm converges linearly with rate
    ${(1-\frac{1}{|V|\Delta_V})}^2$. More specifically
    \begin{equation*}
        \|\q^t-\q^*\| \leq 2\|\q^0-\q^*\|_\infty\sqrt{\Delta_VE}
        (1-\frac{1}{|V|^2\Delta_V^2})^t,
    \end{equation*}
    where $\q^*$ is the optimal point, $\q^0$ is the initial point and
    $E$ is the number of edges in the factor graph.
\end{thm}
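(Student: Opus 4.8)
The plan is to recognize the message passing scheme as block coordinate descent on a dual of the decomposed minimum norm problem~\eqref{prob:mn}, and then to invoke the abstract linear-convergence analysis of~\citet{nishihara2014convergence}, taking care to compute the geometric constant that governs the rate in terms of the factor-graph structure. By Theorem~\ref{thm:equiv}, solving~\eqref{prob:varinf} amounts to solving~\eqref{prob:mn}, and using $B(F)=\sum_{i} B(F_i)$ the latter becomes $\min \sum_{v\in V}\bigl(\sum_{F_i\in\delta(v)} q_{i,v}\bigr)^2$ over $(\q_1,\dots,\q_R)\in B(F_1)\times\cdots\times B(F_R)$, which lives in the lifted space $\bigoplus_{i=1}^R \R^{V_i}$ of dimension $E$. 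Following~\citet{jegelka2013reflection}, but with the reweighted inner products of Definition~\ref{defn:norms} in place of the uniform one (this reweighting is exactly the generalization needed to accommodate $V_i\neq V$), I would recast this as a \emph{best approximation problem}: find the closest points, in the $\|\cdot\|_{G*}$ geometry, between the block-separable set $\mathcal{A}=\prod_i B(F_i)$ and the linear ``consensus'' subspace $\mathcal{C}$ on which each variable's contribution equals the incident-factor average $\mu_{v\to F_i}=\tfrac{1}{|\delta(v)|}\sum_{F_j\in\delta(v)}\mu_{F_j\to v}$.

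Next I would verify that one sweep of the algorithm --- variable-to-factor averaging, then the per-factor projections $\q_i^{t+1}=\argmin_{\q_i\in B(F_i)}\|\q_i-(\q_i^t-\m_i^t)\|_{G*}^2$, then factor-to-variable messages $\mu_{F_i\to v}=q^{t+1}_v$ --- is precisely a block coordinate descent step on this problem, with one block per factor $F_i$. The role of the $G$-weighting is that the averaging map is an orthogonal projection onto $\mathcal{C}$ in the corresponding inner product, while the $G*$-weighting is the one in which the per-factor projection onto $B(F_i)$ is a genuine Euclidean projection (so the divide-and-conquer solver applies). The fact that the updates on distinct factors (and, separately, on distinct variables) touch disjoint coordinates of the lifted vector is what allows the whole sweep to be executed in parallel without affecting the descent property.

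With the algorithm identified as block coordinate descent, I would apply the machinery of~\citet{nishihara2014convergence}: for such alternating/block schemes on a quadratic objective the contraction factor is controlled by the smallest nonzero squared singular value of the (reweighted) variable--factor incidence operator relating $\mathcal{A}$ to $\mathcal{C}$ --- equivalently, the squared cosine of the relevant Friedrichs angle. The key computation is to bound this quantity: under the $\Delta_V$-regularity hypothesis --- every variable lies in exactly $\Delta_V$ factors --- the Gram matrix of that operator becomes highly structured, and a direct spectral estimate yields a lower bound of order $1/(|V|\Delta_V)$, producing the stated per-iteration factor $\bigl(1-\tfrac{1}{|V|^2\Delta_V^2}\bigr)^t$. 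The final step is cosmetic: translate the guarantee back to the marginal vector $\q^t\in\R^V$ extracted at the variable nodes via norm equivalence between $\|\cdot\|_{G*}$ on the $E$-dimensional lifted space and the Euclidean norm on $\R^V$; since each marginal aggregates at most $\Delta_V$ lifted coordinates, this introduces the prefactor $2\|\q^0-\q^*\|_\infty\sqrt{\Delta_V E}$.

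The main obstacle I anticipate is the spectral estimate: one must show that weighting by $|\delta(v)|$ (resp.\ $1/|\delta(v)|$) is exactly what makes the nonzero spectrum of the relevant incidence operator bounded away from zero by a quantity depending only on $|V|$ and $\Delta_V$, and that the hypotheses of~\citet{nishihara2014convergence} --- stated for the special case $V_i=V$ --- are met verbatim after this change of geometry. A secondary, purely bookkeeping difficulty is tracking constants and the exponent through the lifting so that the final bound emerges with the advertised $\sqrt{\Delta_V E}$ prefactor and rate.
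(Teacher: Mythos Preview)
Your high-level architecture is exactly the paper's: recast as a best-approximation problem between the product $\mathcal{B}=\prod_i B(F_i)$ and the consensus subspace $\Lambda$ in the reweighted geometry, identify one sweep of messages with block coordinate descent on this dual, and then invoke the Friedrichs-angle machinery of \citet{nishihara2014convergence}. So the plan is sound.

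Where you are too optimistic is the ``direct spectral estimate.'' Two points you have not anticipated. First, because $\mathcal{B}$ is a polytope rather than a subspace, the convergence result of \citet{nishihara2014convergence} requires bounding the Friedrichs angle between $\Lambda$ and the affine hull of \emph{every face} $\mathcal{B}_z$ of $\mathcal{B}$; by the structure of base polytopes each such face is cut out by a collection of partitions $A_{r,1},\dots,A_{r,M_r}$ of each $V_r$, so the relevant operator depends on an arbitrary choice of such partitions and you must get a bound uniform over all of them. Second, the paper's bound is not a direct eigenvalue computation: after writing $\Lambda=\ker S$ and $\mathrm{aff}_0(\mathcal{B}_z)=\ker T$, one observes that $(ST^T)^T(ST^T)=I-\tfrac{k-1}{k}\mathcal{L}$ where $\mathcal{L}$ is the normalized Laplacian of an auxiliary graph whose vertices are the partition blocks $(r,m)$ and whose edge weights are the intersection sizes $|A_{r_i,m_i}\cap A_{r_j,m_j}|$. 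The required eigenvalue gap then comes from \emph{Cheeger's inequality}: one lower-bounds the Cheeger constant by a volume count (using $\Delta_V$-regularity to get volume $=|V|\Delta_V(\Delta_V-1)$) and a combinatorial min-cut argument (any cut must separate some $v$'s incident blocks, giving cut weight $\ge \Delta_V-1$), yielding $h\ge 2/(|V|\Delta_V)$ and hence $\lambda_2\ge 2/(|V|\Delta_V)^2$. This Cheeger step is the actual content behind your ``direct spectral estimate,'' and without it your proposal does not yet contain a proof.
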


\makeatletter{}\section{Experiments}
\begin{figure*}[t]
    \centering
    \begin{subfigure}[h]{.155\textwidth}
        \centering
        \includegraphics[width=\textwidth]{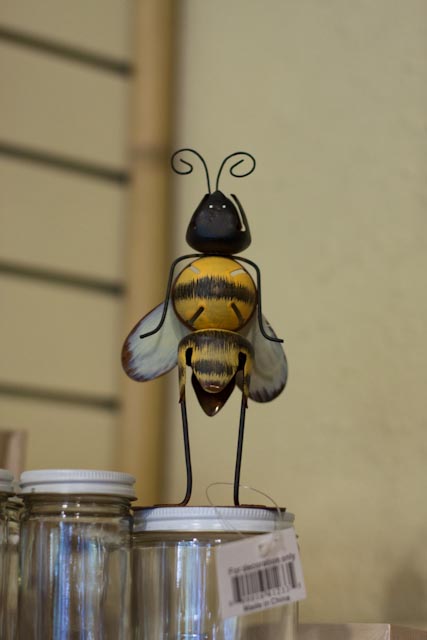}
        \caption{Original image.}
    \end{subfigure}
    \begin{subfigure}[h]{.155\textwidth}
        \centering
        \includegraphics[width=\textwidth]{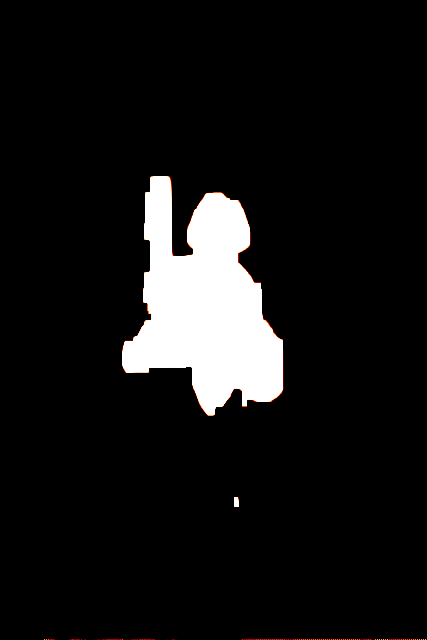}
        \caption{\textsc{FBP 1}.}
    \end{subfigure}
    \begin{subfigure}[h]{.155\textwidth}
        \centering
        \includegraphics[width=\textwidth]{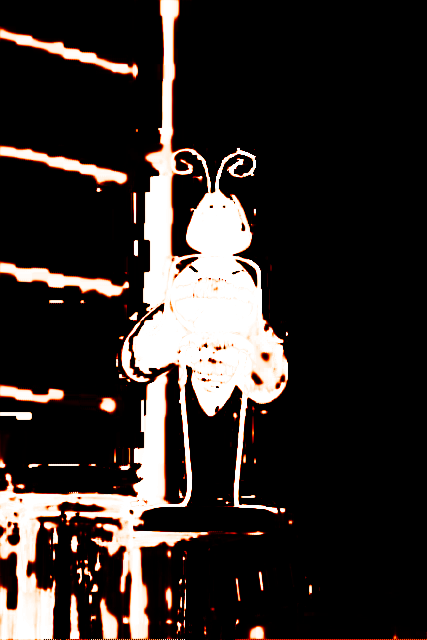}
        \caption{\textsc{FBP 2}.}
    \end{subfigure}
    \begin{subfigure}[h]{.155\textwidth}
        \centering
        \includegraphics[width=\textwidth]{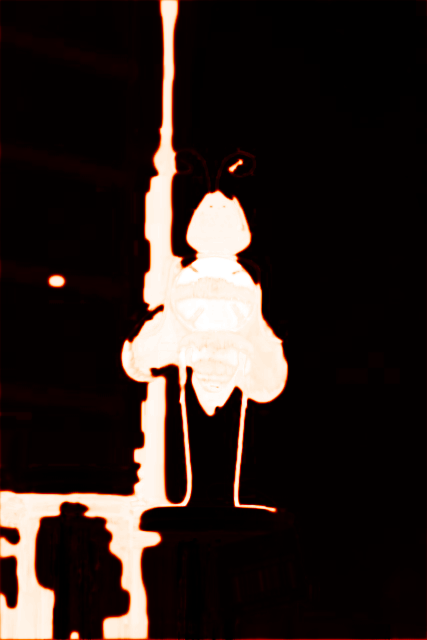}
        \caption{\textsc{FBP 3}.}
    \end{subfigure}
    \begin{subfigure}[h]{.155\textwidth}
        \centering
        \includegraphics[width=\textwidth]{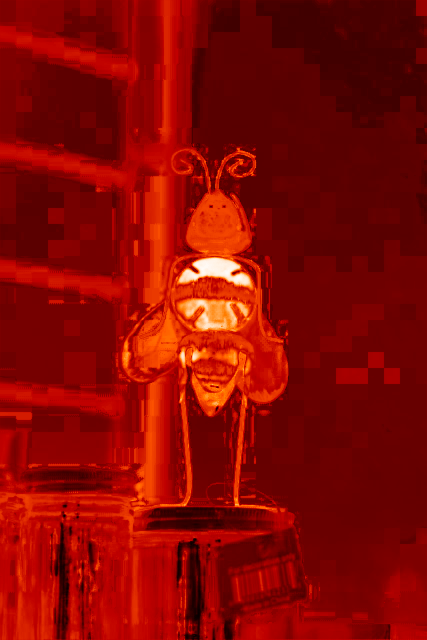}
        \caption{\textsc{FBP 4}.}
    \end{subfigure}
    \begin{subfigure}[h]{.155\textwidth}
        \centering
        \includegraphics[width=\textwidth]{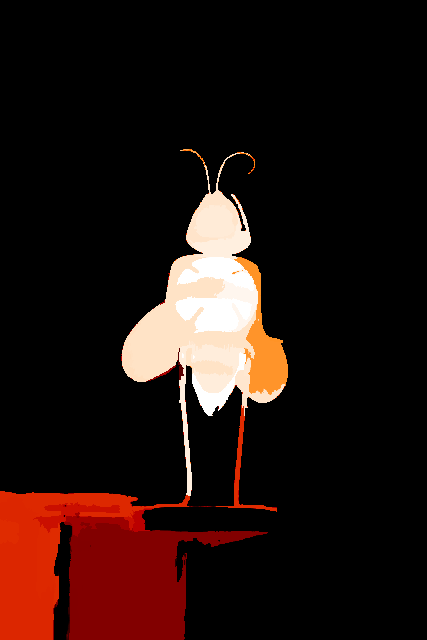}
        \caption{\textsc{HOP 1}.}
    \end{subfigure}

    \begin{subfigure}[h]{.155\textwidth}
        \centering
        \includegraphics[width=\textwidth]{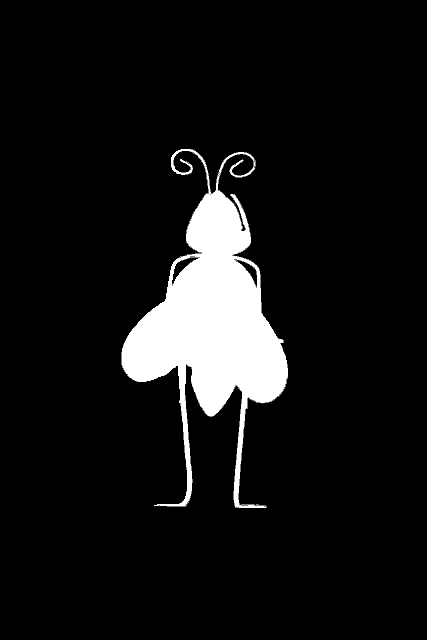}
        \caption{Ground truth.}
    \end{subfigure}
    \begin{subfigure}[h]{.155\textwidth}
        \centering
        \includegraphics[width=\textwidth]{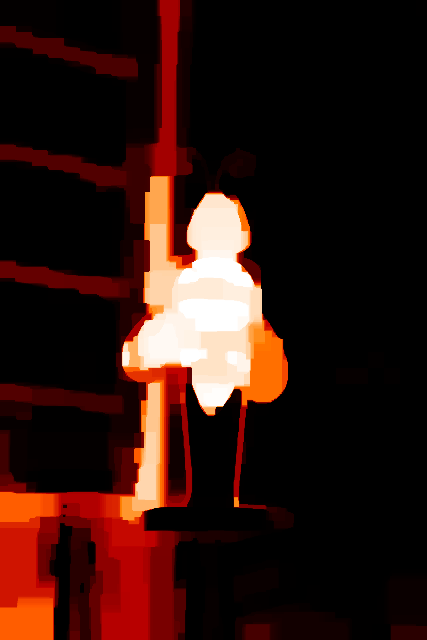}
        \caption{\textsc{DR 1}.}
    \end{subfigure}
    \begin{subfigure}[h]{.155\textwidth}
        \centering
        \includegraphics[width=\textwidth]{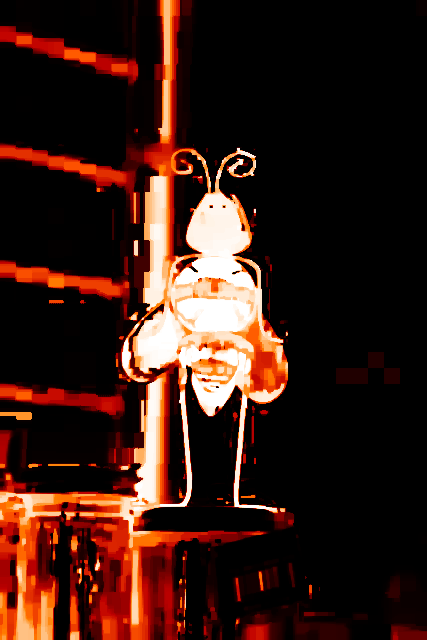}
        \caption{\textsc{DR 2}.}
    \end{subfigure}
    \begin{subfigure}[h]{.155\textwidth}
        \centering
        \includegraphics[width=\textwidth]{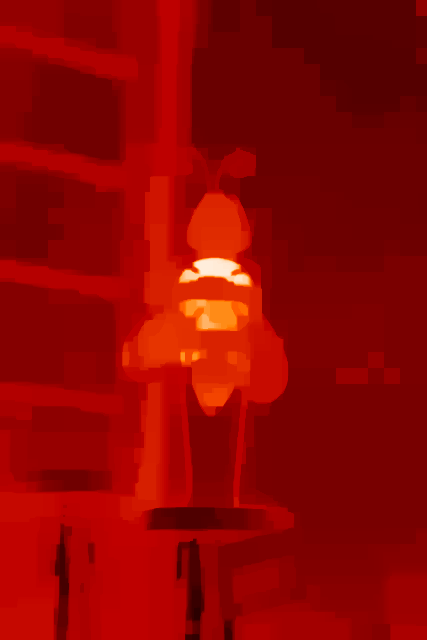}
        \caption{\textsc{DR 3}.}
    \end{subfigure}
    \begin{subfigure}[h]{.155\textwidth}
        \centering
        \includegraphics[width=\textwidth]{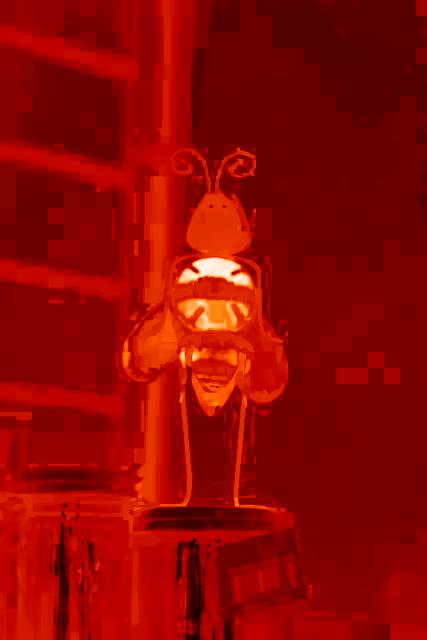}
        \caption{\textsc{DR 4}.}
    \end{subfigure}
    \begin{subfigure}[h]{.155\textwidth}
        \centering
        \includegraphics[width=\textwidth]{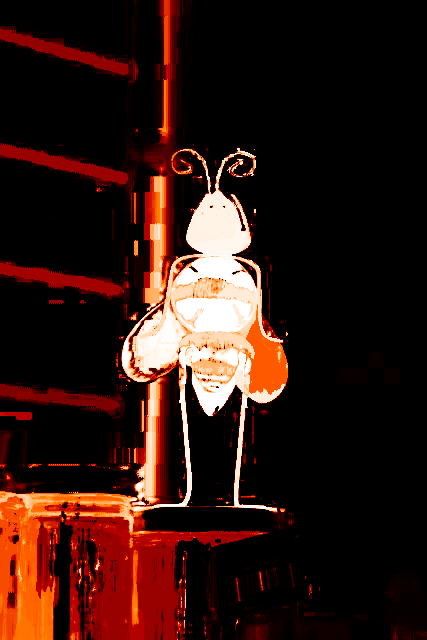}
        \caption{\textsc{HOP 2}.}
    \end{subfigure}
    \caption{Example marginals from the different approximation procedures for the original image (a) with ground truth segmentation (g).  For the results comparing \textsc{FBP} and \textsc{DR} (b-e,h-k) we have used the same
    pairwise weights and weights. The strength of the prior used monotonically decreases with the number specified after the respective method (i.e., b,f,h correspond to strong and e,k,l, to weak priors). Note how FBP is overconfident, whereas our method offers marginals with much higher dynamic range.\vspace{-3mm}}
    \label{fig:marginals}
\end{figure*}

We now report experimental results on applying our parallel variational inference scheme to a challenging image segmentation problem as motivated in \S\ref{sec:submodularity}. The goal of our experiments is to test the scalability of our approach to large problems, and to evaluate the quality of the marginals both qualitatively and quantitatively.
We used the data from~\citet{jegelka2011submodularity}, which contains a total of 36 images,
each with a highly detailed (pixel-level precision) ground truth segmentation. Due to intractability, we cannot
compute the exact marginals against which we would ideally wish to compare. As a proxy
for measuring the quality of the approximations, we use the area under the ROC curve (AUC) as
compared to the ground truth segmentation. We classify each pixel independently as fore- or background by comparing its approximate marginal against a threshold, which we vary to obtain the ROC curve.
We have used the following model,
which contains both pairwise and higher-order interactions.
\[
    F(A) = \alpha m(A) + \beta F_{\textrm{cut}}(A)
    + \gamma\sum_{P_i\in\mathcal P}\phi\left(\frac{|A\cap P_i|}{|P_i|}\right),
\]
where
\begin{itemize}
    \item the unary potentials $m(\cdot)$ were learned from labeled data using a
          5 component GMM;
      \item \looseness -1 the pairwise potentials $F_\textrm{cut}$ connect neighboring pixels $\x$ and
          $\x'$ with weights $w(\x, \x') = \exp(-\theta\|\x-\x\|^2)$, where $\x$ and
          $\x'$ are the RGB values of the pixels; 
    \item the higher order potentials were generated using the mean-shift algorithm
        of~\citet{comaniciu2002mean}. We have used two overlapping layers of superpixels, each layer with different granularity. The concave
        function was defined as $\phi(z) = z(1-z)$.
\end{itemize}
We compared the following inference techniques. The reported typical running times
are for an image of size 427x640 pixels on a quad core machine and we report the
wall clock time of the inference code (without setting up the factor graph or generating
the superpixels).
\begin{itemize}
    \item Unary potentials only with independent predictions, i.e., $\beta=\gamma=0$.
    \item Belief propagation (\textsc{BP}), mean-field (\textsc{MF}) and fractional
        belief propagation (\textsc{FBP}) for the pairwise model (i.e.\ $\gamma=0$).
        We have used the implementation from \texttt{libDAI}~\cite{mooij2010libdai}.
        The maximum number of iterations was set to 30. We note that this code is not parallelized.
        When we observe fast convergence, for example BP can converge in 3 iterations, it takes about
        45 seconds. Even though we have set a relatively low number of iterations, the running
        times can be extremely slow if the methods do not converge. For example, running mean-field
        for 30 iterations can take more than 3 minutes.
    \item \looseness -1 Our approach using only pairwise potentials ($\gamma=0$), solved using the
        total variation Douglas-Rachford (\textsc{DR}) code
        from~\cite{barbero11,barberoTV14,jegelka2013reflection}. We ran for at most
 100 iterations. The inference takes typically less
        than a second.
    \item Our approach with higher order potentials (\textsc{HOP}) only $(\beta=0)$.
        The inference takes less than 12 seconds.
\end{itemize}
For every method we tested several variants using different combinations for
$\alpha,\beta,\gamma$ and $\theta$ (exact numbers provided in the appendix). Then,
we performed a leave-one-out cross-validation for estimating the average AUC.
We have also generated a sequence of 10 trimaps by growing the boundary
around the true foreground to estimate accuracy over the hardest pixels, namely
those at the boundary.

\begin{figure}[!t]
    \centering
    \begin{subfigure}[h]{.5\textwidth}
        \begin{tabular}{lllll}
        \toprule
        Method & Avg.\ AUC & Std.\ Dev. & Avg.\ AUCT & Std.\ Dev. \\
        \midrule
        \textsc{hop} & \textbf{0.9638} & 0.0596  & \textbf{0.9567} & 0.0642 \\
        \textsc{dr} & 0.9602 & 0.0617 & 0.9496 & 0.0691 \\
        \textsc{fbp} & 0.9500 & 0.0662 & 0.9357 & 0.0975 \\
        \textsc{mf} & 0.9486 & 0.0675 & 0.9420 & 0.0764 \\
        \textsc{unary} & 0.9484 & 0.0681 & 0.9425 & 0.0759 \\
        \textsc{bp} & 0.9445 & 0.0779 & 0.9360 & 0.0964 \\
        \bottomrule
        \end{tabular}
    \end{subfigure}
    \caption{Average scores of the methods estimated using leave-one-out cross
    validation. The \emph{Avg.\ AUC} column is the average area under the ROC curve.
    The \emph{Avg.\ AUCT} column reports the average of the mean AUC over the 10 trimaps.
The second and the fourth columns are the standard deviation of the preceding columns.\vspace{-3mm}}
    \label{fig:table}
\end{figure}
\begin{figure*}[!htbp]
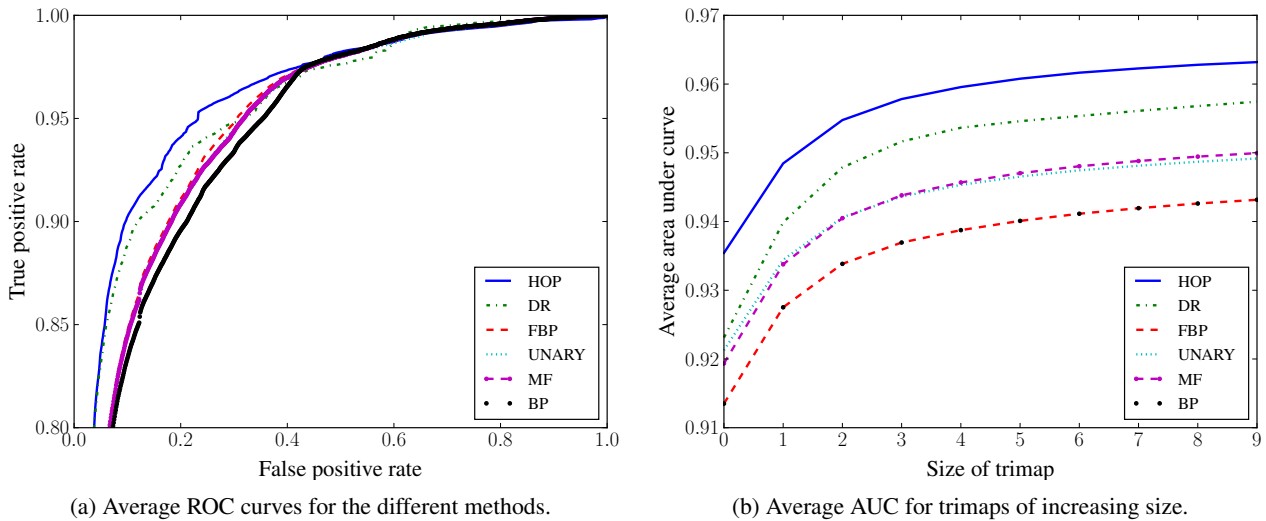

    \begin{subfigure}[h]{.5\textwidth}
        \centering
        \scalebox{0.45}{\input{auc.pgf}}
        \caption{Average ROC curves for the different methods.}
    \end{subfigure}
    \begin{subfigure}[h]{.5\textwidth}
        \centering
        \scalebox{0.45}{\input{tri.pgf}}
        \caption{Average AUC for trimaps of increasing size. }
    \end{subfigure}
    \caption{Comparison of inference methods in terms of their accuracy. For each method we optimize the parameters via grid-search, and report leave-one-out cross-validation results. (a) ROC curves for classifying pixels as fore- or background by independently thresholding marginals, averaged over the whole image. (b) Results over the trimaps (blurred boundaries around the ground truth segmentation), focusing on ``difficult'' pixels. For every algorithm and every of the 10 trimap sizes we report the average area under the curve.\vspace{-3mm}}
    \label{fig:curves}
\end{figure*}

\paragraph{Accuracy.} We first wish to quantitatively compare the accuracy of the approximate marginals. We report the aggregate results
in Figure~\ref{fig:table}, and the ROC curves in Figure~\ref{fig:curves}.
We can clearly see that our approach outperforms the traditional inference methods
for both objectives --- the AUC over the whole image and over the challenging boundary (trimaps). Sometimes we see very poor behavior of the alternative
methods, which can be attributed to either their over-confidence (as verified below), or the fact that they optimize non-convex objectives and can
fail to converge within the given number of iterations. Lastly, capturing high-order interactions leads to higher accuracy (in particular around the boundary) than pairwise potentials only.

\paragraph{Properties of marginals.}
We would also like to understand the qualitative characteristics of the resulting marginals of our methods when compared with the traditional techniques. From the discussion on the divergence minimization in \S\ref{sec:divergence}, we would expect the
approximate marginals to avoid assigning low probabilities and rather prefer to err conservatively, i.e., 
on the side of causing false positives. On the other hand, it is known that the results of
belief propagation are often over-confident. For this purpose, we provide a visual comparison in Figure~\ref{fig:marginals}. Namely, each of the four
$\textsc{FBP/DR}$ pairs are results using the respective algorithms for the same
parameters of the model. We observe exactly what the theory predicts --- the distribution obtained via \algo
is less concentrated around the object and  mass is spread around more. The contrast is starkest on Figures~\ref{fig:marginals} \emph{(b)} and \emph{(h)}, where we use a very strong pairwise prior (high $\beta$).
On Figures~\ref{fig:marginals} \emph{(e)} and \emph{(k)} we have used a very weak pairwise prior (low $\beta$), and as
expected the resulting marginals are mainly determined by the unary part and the
choice of inference procedure does not make a difference.
The results in the last column are from the higher order model, with two different
values of $\gamma$ (the strength of the higher order potential). We can see that
the resulting probabilities better preserve the boundaries of the object and the
fine details, which is one of the main benefits of using these models.
 
\makeatletter{}\section{Conclusion}
\looseness -1 We have addressed the problem of variational inference in log-supermodular distributions.  In particular, building on the \algo approach of \citet{djolonga14variational}, we established two natural, important interpretations of their method. First, we showed how \algo can be reduced to solving the well-studied minimum norm point problem, making a wealth of tools from submodular optimization suddenly available for approximate Bayesian inference. Secondly, we showed that the factorized distributions returned by \algo minimize a particular type of information divergence. Both of these theoretical connections are immediately algorithmically  useful. In particular, for the common case of decomposable models, both connections lead to efficient message passing algorithms. Exploiting the minimum norm connection, we proved strong convergence rates for a natural parallel approach, with convergence rates dependent on the factor graph structure. Lastly, we demonstrate our approach on a challenging image segmentation task. Our results demonstrate the accuracy of our marginals (in terms of AUC score) compared to those produced by classical techniques like belief propagation, mean field and variants, on models where these can be applied. We also show that performance can be further improved by moving to high-order potentials, leading to models where classical marginal inference techniques become intractable. We believe our results provide an important step towards practical, efficient inference in models with complex, high-order variable interactions.

\nocite{scikit-learn}

\bibliography{refs}
\bibliographystyle{icml2015}

\onecolumn
\appendix

\clearpage{}\makeatletter{}\section{Theory}
\subsection{Equivalence between the minimum norm and inference problems}

We will show a stronger result from which Theorem~\ref{thm:equiv} follows by taking
$w_i=1$ and $y_i=0$.

\begin{lemma}For positive weights $w_i>0$ the objectives
    $\sum_{i\in V} w_i(x_i - y_i)^2$ and
    $\sum_{i\in V} \frac{1}{w_i}\log(\exp(-w_i x_i) + \exp(-w_i y_i))$ have
    the same optimum under the constraint $\x\in B(F)$.
\end{lemma}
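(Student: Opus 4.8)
The plan is to reduce everything to the first-order optimality condition for minimizing a separable convex function over the base polytope, together with the classical fact (Edmonds' greedy algorithm) that the set of minimizers of a \emph{linear} functional over $B(F)$ depends on the cost vector only through the ordered partition of $V$ induced by its coordinates. First I would record that both objectives are separable sums of strictly convex, differentiable functions of a single variable: for the second summand, $x_i \mapsto \tfrac{1}{w_i}\log(e^{-w_i x_i}+e^{-w_i y_i})$ is a scaled log-sum-exp, hence strictly convex because $w_i>0$. Since $B(F)$ is a nonempty compact polytope, each problem then has a unique minimizer. Write $\Phi_1(\x)=\sum_i w_i(x_i-y_i)^2$ and $\Phi_2(\x)=\sum_i \tfrac{1}{w_i}\log(e^{-w_i x_i}+e^{-w_i y_i})$, with gradients $\partial_i\Phi_1(\x)=2w_i(x_i-y_i)$ and $\partial_i\Phi_2(\x)=-\bigl(1+e^{w_i(x_i-y_i)}\bigr)^{-1}$.

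The key observation is that these two gradients are linked by a single increasing reparametrization that does not depend on $i$: setting $u_i=w_i(x_i-y_i)$ we have $\partial_i\Phi_1=2u_i$ and $\partial_i\Phi_2=-(1+e^{u_i})^{-1}=\phi(2u_i)$, where $\phi(t)=-(1+e^{t/2})^{-1}$ is strictly increasing on $\R$. Thus $\partial_i\Phi_2(\x)=\phi\bigl(\partial_i\Phi_1(\x)\bigr)$ for every $\x$ and every $i\in V$, so at any fixed point the gradient vectors $\g=\nabla\Phi_1(\x)$ and $\tilde\g=\nabla\Phi_2(\x)$ induce exactly the same ordering of the coordinates and the same family of level sets $\{i:g_i\le\lambda\}$.

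Then I would argue as follows. Let $\x^*$ be the minimizer of $\Phi_1$ over $B(F)$ and $\g=\nabla\Phi_1(\x^*)$. Since $\Phi_1$ is convex and $B(F)$ convex, first-order optimality gives $\x^*\in\arg\min_{\y\in B(F)}\langle\g,\y\rangle$; by Abel summation over the level sets of $\g$ (using $\y(A)\le F(A)$ on $B(F)$ and $\y(V)=F(V)$) this is equivalent to every level set of $\g$ being tight at $\x^*$. By the previous paragraph $\tilde\g=\nabla\Phi_2(\x^*)$ has the same level sets as $\g$, so they are tight at $\x^*$ too, which is precisely the statement $\x^*\in\arg\min_{\y\in B(F)}\langle\tilde\g,\y\rangle$, i.e.\ $\x^*$ satisfies first-order optimality for $\Phi_2$ over $B(F)$. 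Convexity of $\Phi_2$ upgrades this to global optimality, and strict convexity pins down $\x^*$ as \emph{the} minimizer of $\Phi_2$. Theorem~\ref{thm:equiv} then follows by setting $w_i=1$, $y_i=0$.

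The main obstacle, and the step needing the most care, is the equivalence ``$\x^*$ minimizes $\langle\g,\cdot\rangle$ over $B(F)$'' $\iff$ ``every level set $\{i:g_i\le\lambda\}$ is tight at $\x^*$'', including the bookkeeping when coordinates of $\g$ coincide, so that the minimizing face is not a vertex. This is exactly Edmonds' greedy characterization of linear optimization over $B(F)$, which I would either cite from \citet{fujishige-book,fbach-ft} or prove in a few lines via the Abel-summation argument sketched above. Everything else — strict convexity of the summands, existence and uniqueness of minimizers, and the explicit form of $\phi$ — is routine.
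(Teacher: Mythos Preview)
Your argument is correct and rests on the same underlying observation as the paper: the coordinate-wise gradients of the two objectives are related by a single strictly increasing scalar map that does not depend on $i$, so they induce identical level sets, and hence identical first-order optimality conditions over $B(F)$. The execution differs in two respects. First, the paper substitutes $\z=-\x$ and works on $B(\overline F)$ before invoking \citet{fujishige-book}[Theorem~8.1] as a black box; you stay on $B(F)$ throughout and avoid the change of variable entirely, which is cleaner. Second, and more substantively, what you call ``the main obstacle'' --- the equivalence between $\x^*\in\arg\min_{\y\in B(F)}\langle\g,\y\rangle$ and tightness of the lower level sets of $\g$ --- \emph{is} essentially Fujishige's Theorem~8.1 (or rather the ingredient that makes it work); you are reproving that result via the Abel-summation / greedy argument rather than citing it. So your route is more self-contained and exposes the mechanism explicitly, at the cost of a few extra lines; the paper's route is terser but relies on the reader knowing the cited theorem. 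Both are perfectly sound.
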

\begin{proof}
    For the second objective we have that
\[
    \sum_{i\in V} \frac{1}{w_i}\log(\exp(-w_i x_i) + \exp(-w_i y_i))
        \stackrel{\z=-\x}{=}
    \sum_{i\in V} \frac{1}{w_i}\log(\exp(w_i z_i) + \exp(-w_i y_i)).
\]
Hence, the optimum of the problem is the negative of the problem on $-B(F)$, which
is the base polytope of the submodular function $\overline{F}(A) = F(V-A) - F(V)$.
The gradient of this objective with respect to any $z_i$ is equal to
$\sigma(w_i(z_i + y_i))$, where $\sigma(u) = 1 /(1+e^{-u})$ is the sigmoid function.
As the weights $w_i$ are positive, we have that
\[
     \sigma(w_i(z_i + y_i)) \leq \sigma(w_j(z_j + y_j))
        \iff
     w_i (z_i + y_i) \leq w_j (z_j + y_j).
\]
From \citet{fujishige-book}[\emph{Theorem 8.1}], it follows that the above problem and $\sum_i w_i (z_i + y_i)^2$ have
the same solution $\z^*$ on $-B(F)=B(\overline{F})$. However, if $\z^*$ is the projection of $-\y$ onto $-B(F)$,
then $\x = -\z^*$ is the projection of $\y$ onto $B(F)$.
\end{proof}
\subsection{Connection to the infinite R\'enyi divergences}
If we expand the R\'enyi infinite divergence for $P(S)=\frac{1}{\Z_p}\exp(-F(S))$
and $Q(S)=\frac{1}{\Z_q}\exp(-q(S))$ for modular $q(\cdot)$ we get the following
\begin{equation}\label{eqn:dinfty}
    \Div{\infty}{P}{Q} = \log\sup_{A\seq V} \frac{P(A)}{Q(A)}
    =  \log\sup_{A\seq V} \frac{\exp(-F(A))/\Z_p}{\exp(-q(A))/\Z_q}
    = \log\Z_q  - \log\Z_p + \sup_{A\seq V} \{ q(A) - F(A) \}.
\end{equation}

As we will consider the problem of minimizing the above quantity with respect to the modular function
$q$ we can ignore the constant $\log\Z_p$. We will also expand the log-partition function of $q$,
$\log\Z_q = \sum_{i\in V} \log(1+\exp(-q_i))$ and introduce a new variable $t$ capturing the supremum above to arrive at the following
formulation.
\begin{align}\label{prob:infty}
\begin{split}
    \textrm{minimize}& \sum_{i\in V} \log(1+\exp(-q_i)) + t \\
    \textrm{subject to}&\quad q(A) \leq F(A) + t\quad \textrm{for all } A\seq V
\end{split}
\end{align}
\begin{defn}
    For any normalized submodular function $F$, define $\L^*(F)$ to be the optimum
    value of minimizing $\sum_{i\in V} \log(1+\exp(-s_i))$ subject to $\s\in B(F)$.
\end{defn}

To show the connection, we will need the following two lemmas.
\begin{lemma}[\citet{djolonga14variational}]\label{lem:duality}
    By strong Fenchel duality we have that
    \[
        \L^*(F) = \min_{\s\in B(F)} \sum_{i\in V} \log(1+\exp(-s_i)) = \sup_{\p\in[0,1]^V} \Ent[\p] - f(\p),
    \]
    where $f$ is the Lov\'asz extension of $F$ and $\Ent[\p]$ is the entropy of
    a random vector of independent Bernoulli random variables.
\end{lemma}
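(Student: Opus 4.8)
The first equality in the statement is the definition of $\L^*(F)$, so the content is the second one, which I would obtain as a textbook application of Fenchel--Rockafellar duality --- exactly what ``strong Fenchel duality'' in the statement refers to. Write the objective of Problem~\eqref{prob:varinf} as $g(\s) + \iota_{B(F)}(\s)$, where $g(\s) = \sum_{i\in V}\log(1+\exp(-s_i))$ is convex, smooth and finite on all of $\R^V$, and $\iota_{B(F)}$ is the $\{0,+\infty\}$-valued indicator of the base polytope. Fenchel--Rockafellar then yields
\[
    \min_{\s\in B(F)} g(\s) = \sup_{\p\in\R^V}\Bigl(-g^*(-\p) - \iota_{B(F)}^*(\p)\Bigr),
\]
and the two conjugates on the right are precisely what turn into the entropy term and the Lov\'asz extension.

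First I would identify $\iota_{B(F)}^*$, which is the support function of the base polytope. By Edmonds' greedy characterization, $\sup_{\s\in B(F)}\langle\p,\s\rangle = f(\p)$ for every $\p\in\R^V$: sorting the coordinates of $\p$ in decreasing order and reading off the telescoping sum of marginal gains of $F$ realizes both the greedy maximizer over $B(F)$ and the definition of the Lov\'asz extension. This is a standard fact I would simply cite from~\cite{fujishige-book,fbach-ft}, so $\iota_{B(F)}^*(\p) = f(\p)$.

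Second I would compute $g^*$. As $g$ is separable, $g^*(\y) = \sum_{i\in V} h^*(y_i)$ with $h(s) = \log(1+e^{-s})$. Setting the stationarity condition $y = h'(s) = -1/(1+e^{s})$, solving for $s = \log(1+y) - \log(-y)$ with $p := -y \in (0,1)$, and substituting back gives $h^*(-p) = p\log p + (1-p)\log(1-p)$, i.e.\ the negative binary entropy, which is finite exactly for $p\in[0,1]$ under the convention $0\log 0 = 0$ and equals $+\infty$ otherwise. Summing coordinates, $-g^*(-\p) = \Ent[\p]$ on $\p\in[0,1]^V$ and $-\infty$ off the cube, so the unconstrained dual supremum over $\R^V$ collapses to $\sup_{\p\in[0,1]^V}\bigl(\Ent[\p] - f(\p)\bigr)$, which is the claimed right-hand side. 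Strong duality (equality, with the primal infimum attained since $B(F)$ is compact and $g$ continuous) holds because $\operatorname{dom} g = \R^V$ trivially meets the relative interior of $B(F)$, the standard constraint qualification for Fenchel--Rockafellar.

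The two conjugate computations are routine; the one place requiring care is the domain bookkeeping --- verifying that $h^*$ is genuinely $+\infty$ outside $[-1,0]$ so that the dual supremum is legitimately restricted to the closed cube $[0,1]^V$, and that the entropy extends lower-semicontinuously to its boundary. I expect the main (minor) conceptual step to be recognizing the conjugate of $\log(1+e^{-s})$ as the negative binary entropy and matching its form against the $\Ent[\cdot]$ of a vector of independent Bernoullis used in the statement; everything else is bookkeeping.
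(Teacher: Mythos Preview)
Your argument is correct and is precisely the standard Fenchel--Rockafellar computation one expects here: the support function of $B(F)$ is the Lov\'asz extension, the coordinatewise conjugate of $\log(1+e^{-s})$ is the negative binary entropy with effective domain $[-1,0]$, and the trivial constraint qualification (continuity of $g$ on all of $\R^V$, nonemptiness of $B(F)$) closes the gap. Note that the paper does not actually prove this lemma --- it is quoted verbatim from \citet{djolonga14variational} and used as a black box in the subsequent $\OPT_\beta$ argument --- so there is no ``paper's own proof'' to compare against; your derivation is exactly the one that citation points to.
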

The following lemma is already known, as such functions have been already used, but we prove it for completeness.
\begin{lemma}\label{lem:fbeta}
    For any normalized submodular function $F:2^V\to\R$ define $F_\beta:2^V\to \mathbb{R}$
    as follows.
    \[
        F_\beta(S) =
        \begin{cases}
             0 & \text{if } S=\emptyset \\
             F(S) + \beta & \text{if } S\neq \emptyset
        \end{cases}
    \]
    Then, $F_\beta$ is submodular, normalized and has Lov\'asz extension
    $f_\beta(\w) = f(\w) + \beta\max_i w_i$.
\end{lemma}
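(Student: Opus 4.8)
The plan is to check the three assertions separately; each is an elementary verification, and the only subtle point is that submodularity genuinely requires $\beta \ge 0$ (which holds in every application of this lemma, e.g.\ for the variable $t$ in~\eqref{prob:infty}, where the constraint at $A=\emptyset$ forces $t\ge0$). Normalization is immediate, since $F_\beta(\emptyset) = 0$ by definition.

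For submodularity I would write $F_\beta = F + \beta\, g$, where $g(S) = 1$ for $S \neq \emptyset$ and $g(\emptyset) = 0$, and first check that $g$ itself is submodular: for any $A, B \seq V$ the inequality $g(A) + g(B) \ge g(A\cup B) + g(A\cap B)$ holds with equality whenever $A$ or $B$ is empty (both sides then equal $g(A\cup B)$), and otherwise reads $2 \ge 1 + g(A\cap B)$, which is true because $g \le 1$. Since $F$ is submodular and $\beta \ge 0$, the function $F_\beta = F + \beta g$ is submodular as a nonnegative combination of submodular functions. (One could equally well argue by a direct case split on whether $A \cap B = \emptyset$; the only nonimmediate case is $A, B \neq \emptyset$ with $A \cap B = \emptyset$, where, using $F(\emptyset)=0$, the claim reduces to submodularity of $F$ together with $\beta \ge 0$.)

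For the Lovász extension I would substitute $F_\beta$ into the chain formula. Fix $\w$, let $\pi$ be an ordering with $w_{\pi(1)} \ge \cdots \ge w_{\pi(n)}$, and put $S_k = \{\pi(1), \dots, \pi(k)\}$ with $S_0 = \emptyset$, so that
\[
    f_\beta(\w) = \sum_{k=1}^{n} w_{\pi(k)}\bigl(F_\beta(S_k) - F_\beta(S_{k-1})\bigr).
\]
Because $F_\beta$ coincides with $F$ on $\emptyset$ and with $F + \beta$ on every nonempty set, the increment $F_\beta(S_k) - F_\beta(S_{k-1})$ equals $F(S_k) - F(S_{k-1})$ for all $k \ge 2$, and equals $F(S_1) - F(S_0) + \beta$ for $k = 1$. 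Summing, every occurrence of $\beta$ cancels except a single $+\beta$ weighted by $w_{\pi(1)} = \max_i w_i$, which yields $f_\beta(\w) = f(\w) + \beta \max_i w_i$, as claimed.

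There is no real obstacle here: the argument is a short direct computation. The two things to be careful with are the sign of $\beta$, already noted, and using the chain-rule definition of the Lovász extension consistently throughout (equivalently, restricting to $\w$ in the nonnegative orthant, which is all that is needed when this lemma is invoked via Lemma~\ref{lem:duality}).
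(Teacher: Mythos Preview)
Your proposal is correct and follows essentially the same approach as the paper. The only cosmetic difference is in the submodularity check: you decompose $F_\beta = F + \beta g$ and verify the lattice inequality for $g$, whereas the paper verifies the diminishing-returns form directly on $F_\beta$ with a case split on $A=\emptyset$ versus $A\neq\emptyset$; the Lov\'asz extension computation is identical in both.
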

\begin{proof}
    To show that $F$ is submodular we need to show that (see e.g.~\cite{fbach-ft}[Prop. 2.3]) for
    any $A\seq V$ and any $j,k\in V-A$ we have that
    $F_\beta(A\cup \{k\}) - F_\beta(A) \geq F_\beta(A\cup \{j, k\}) - F_\beta(A\cup \{j\})$. If
    $A\neq \emptyset$, then the above inequality follows immediately from the submodularity of $F$. Otherwise,
    we have the inequality
    $F(\{k\}) + \beta - F(\emptyset) \geq F(\{j, k\}) - F(\{j\})$,
    which again easily follows from the submodularity of $F$ and the fact that $\beta \geq 0$.
    The Lov\'asz extension of $F_\beta$ is defined as
    \[
        f_\beta(w) = \sum_{k=1}^{|V|} w_{j_k}\big( F_\beta(\{j_1,\ldots,j_k\}) - F_\beta(\{j_1,\ldots,j_{k-1}\}) \big),
    \]
    where the indices are chosen so that $w_{j_1}\geq w_{j_2}\geq\ldots\geq w_{j_{|V|}}$.
    Then, every term in the sum will be the same if we replace $F_\beta$ with $F$ (because $\beta$ will
    be added and subtracted), except for the first term when $k=1$. This term is equal to
    $w_{j_1}(F(\{j_1\}) + \beta - F(\emptyset))$, and the result follows immediately
    as $w_{j_1} = \max_i w_i$.
\end{proof}

\begin{lemma}
    For submodular functions Problem~\eqref{prob:infty} reaches the minimum for $t=0$.
\end{lemma}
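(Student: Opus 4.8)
The plan is to rewrite Problem~\eqref{prob:infty} as a minimization over the scalar $t$ alone, to reduce its inner problem to the base polytope, and then to use the two lemmas just established to see that whatever one gains by increasing $t$ is always at least cancelled by the term $+t$ in the objective, so that $t=0$ is optimal.

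First I would identify the feasible set. Writing $F_t := F_\beta$ for $\beta = t$ as in Lemma~\ref{lem:fbeta}, the constraints ``$q(A)\le F(A)+t$ for all $A\seq V$'' are exactly the condition $q\in P(F_t)$: the constraint at $A=\emptyset$ reads $0\le t$ (so $t\ge 0$ is forced), while for $A\neq\emptyset$ it reads $q(A)\le F(A)+t = F_t(A)$. Hence Problem~\eqref{prob:infty} equals $\min_{t\ge 0}\big[\,t + \min_{q\in P(F_t)}\sum_{i\in V}\log(1+e^{-q_i})\,\big]$.

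Next I would reduce the inner minimization to $B(F_t)$. Since $x\mapsto\log(1+e^{-x})$ is strictly decreasing, it is optimal to take each $q_i$ as large as the polyhedron allows; and every point of $P(F_t)$ is dominated coordinatewise by a point of $B(F_t)$ (a standard fact about submodular polyhedra, see~\cite{fujishige-book}). Therefore the inner optimum is $\min_{q\in B(F_t)}\sum_i\log(1+e^{-q_i}) = \L^*(F_t)$, so Problem~\eqref{prob:infty} becomes $\min_{t\ge 0}\big[\,t + \L^*(F_t)\,\big]$. Finally I would invoke duality: by Lemma~\ref{lem:duality} combined with Lemma~\ref{lem:fbeta} (which gives $f_t(\p)=f(\p)+t\max_i p_i$), we have $\L^*(F_t) = \sup_{\p\in[0,1]^V}\big(\Ent[\p] - f(\p) - t\max_i p_i\big)$, and hence for $t\ge 0$, $t + \L^*(F_t) = \sup_{\p\in[0,1]^V}\big(\Ent[\p] - f(\p) + t(1-\max_i p_i)\big)$. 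Since $\max_i p_i\le 1$, each term inside the supremum is nondecreasing in $t$, so $t + \L^*(F_t)\ge \sup_{\p}\big(\Ent[\p]-f(\p)\big) = \L^*(F_0) = \L^*(F)$, with equality at $t=0$ (note $F_0=F$ because $F$ is normalized). Thus the minimum over $t\ge 0$ is attained at $t=0$. (In fact it is the unique optimal $t$: the supremum defining $\L^*(F)$ is attained in the open cube $(0,1)^V$, since the gradient of $\Ent$ diverges at the boundary while $f$ is Lipschitz, so $\max_i p_i<1$ at the maximizer and the inequality is strict for $t>0$.)

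The one step I expect to need care is the reduction from $P(F_t)$ to $B(F_t)$: either one cites the domination property cleanly, or one argues in a line that the recession cone of $P(F_t)$ is $\R^V_{\le 0}$, so $\{q'\in P(F_t): q'\ge q\}$ is compact and contains a maximal element $q^*\ge q$, and that at a maximal point the lattice of tight sets has $V$ as its top, forcing $q^*(V)=F_t(V)$, i.e.\ $q^*\in B(F_t)$. Everything else is routine bookkeeping with Lemmas~\ref{lem:duality} and~\ref{lem:fbeta}.
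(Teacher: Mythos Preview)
Your proof is correct and follows essentially the same route as the paper: both reduce the inner problem at fixed $t=\beta$ to $\L^*(F_\beta)+\beta$, then invoke Lemma~\ref{lem:duality} and Lemma~\ref{lem:fbeta} to write everything as $\sup_{\p}\big(\Ent[\p]-f(\p)+\beta(1-\max_i p_i)\big)$ and use $\max_i p_i\le 1$ to conclude monotonicity in $\beta$. The paper is terser---it simply asserts $\OPT_\beta=\L^*(F_\beta)+\beta$ and then fixes a single maximizing $\p$ to chain the inequalities---whereas you spell out the $P(F_t)\to B(F_t)$ reduction and phrase the comparison as a pointwise-monotone supremum; your added uniqueness remark (via the interior maximizer) is a nice bonus the paper does not include.
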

\begin{proof}
Define $\OPT_\beta$ to be the optimum value of problem~\eqref{prob:infty} for
$t=\beta$. Note that $\OPT_\beta = \L^*(F_{\beta}) + \beta$. Then
\[
    \OPT_0 \stackrel{\mathrm{Lem.}\ref{lem:duality}}{=} \Ent[\p] - f(\p)
    \leq \Ent[\p] -f(\p) + \beta\underbrace{(1 - \max_i p_i)}_{\geq 0}
    \stackrel{\mathrm{Lem.}\ref{lem:fbeta}}{=} \Ent[\p] - f_\beta(\p) + \beta
    \stackrel{\mathrm{Lem.}\ref{lem:duality}}{\leq} \L^*(f_\beta) + \beta = \OPT_\beta.
\]
\end{proof}

And this immediately implies Lemma~\ref{lem:gb} and Theorem~\ref{thm:div-eq}.
We will now prove Lemma~\ref{lem:counterexample} by giving a specific counterexample.

\begin{lemma}
    There is a supermodular function for which the minimum is achieved for some $t > 0$.
\end{lemma}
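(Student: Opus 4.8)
The plan is to exhibit an explicit supermodular function on a two-element ground set and verify by a direct one-variable calculation that the minimizing $t$ in Problem~\eqref{prob:infty} is strictly positive. Let $V=\{1,2\}$ and define $F(\emptyset)=0$, $F(\{1\})=F(\{2\})=-1$, and $F(\{1,2\})=3$. This $F$ is normalized and strictly supermodular: the marginal gain of adding element $1$ grows from $F(\{1\})-F(\emptyset)=-1$ to $F(\{1,2\})-F(\{2\})=4$ (and symmetrically for element $2$), so $F$ violates the submodular inequality at every comparable pair; in particular $P(S)\propto\exp(-F(S))$ is log-submodular. Writing $\phi(q):=\log(1+\exp(-q))$, Problem~\eqref{prob:infty} for this $F$ reads: minimize $\phi(q_1)+\phi(q_2)+t$ over $t\ge 0$ subject to $q_1\le t-1$, $q_2\le t-1$, and $q_1+q_2\le t+3$.

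First I would solve the inner minimization over $\q$ for fixed $t\in[0,5]$. Since $\phi$ is strictly decreasing, $\phi(q_1)+\phi(q_2)$ is decreasing in each coordinate, so its minimum over the feasible polytope is attained at the componentwise-largest feasible point. For $t\le 5$ the point $q_1=q_2=t-1$ is feasible, because the third constraint reads $2(t-1)\le t+3$, i.e.\ $t\le 5$, and it dominates every feasible $\q$ coordinatewise (each $q_i\le t-1$). Hence on $[0,5]$ the optimal value of Problem~\eqref{prob:infty} equals $g(t):=2\phi(t-1)+t$.

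Next I would minimize $g$ on $[0,5]$ by differentiation: a short computation gives $g'(t)=1-\frac{2}{e^{\,t-1}+1}=\frac{e^{\,t-1}-1}{e^{\,t-1}+1}$, which is negative on $[0,1)$, zero at $t=1$, and positive on $(1,5]$, so $g$ is strictly decreasing then strictly increasing, with unique minimizer $t=1$ and $g(1)=2\log 2+1$. To rule out larger $t$, I would use the crude bound that the optimal value of Problem~\eqref{prob:infty} is at least $t$ for every $t$ (since $\phi\ge 0$), so for $t>5$ the objective exceeds $5>2\log 2+1=g(1)$. Together with continuity of the value function on the compact interval $[0,5]$, this shows the global minimum over $t\ge 0$ is attained, uniquely, at $t^\ast=1>0$, which is the claim.

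The calculation is elementary; the only points needing a little care are (i) checking feasibility and coordinatewise dominance of the candidate inner optimum on the whole interval $[0,5]$ — which is why $F(\{1,2\})=3$ is taken large enough that $t=1$ sits well inside the regime where the singleton constraints bind — and (ii) confirming the value function is genuinely minimized rather than merely approached, handled by the bound $\ge t$. No appeal to the duality or Lov\'asz-extension machinery (Lemmas~\ref{lem:duality}--\ref{lem:fbeta}) is needed; the contrast with the submodular case, where those lemmas force $t=0$, is precisely that for supermodular $F$ the ``$+t$'' relaxation loosens the singleton bounds at rate $1$ per coordinate, so increasing $t$ slightly above $0$ strictly decreases the objective whenever the logistic values of the singletons satisfy $\sum_{i}1/(1+e^{F(\{i\})})<|V|-1$, which holds here since $2/(1+e)<1$.
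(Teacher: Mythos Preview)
Your proof is correct and takes a genuinely different route from the paper's. The paper exhibits a different supermodular function ($F(\{1\})=-20$, $F(\{2\})=-8$, $F(\{1,2\})=-16$) and argues indirectly via the Lagrange dual of Problem~\eqref{prob:infty}: it finds a dual-feasible certificate that lower-bounds $\OPT_0$ by $28$, then exhibits a primal-feasible point at $t=1$ with objective below $27.1$, so $\OPT_1<\OPT_0$. You instead choose a symmetric example and solve the problem completely: you observe that for $t\le 5$ the singleton constraints alone determine the inner optimum, reduce to the scalar function $g(t)=2\phi(t-1)+t$, and differentiate to find the unique global minimizer $t^\ast=1$. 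Your approach is more elementary (no duality) and yields the exact optimal $t$; the paper's approach only certifies $\OPT_0>\OPT_1$ without locating the optimum, but its dual-certificate idea would generalize to examples where the inner problem cannot be solved in closed form.

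One minor slip in your final paragraph: the general sufficient condition you state has a sign typo. The derivative at $t=0$ (when the singleton constraints bind) is $g'(0)=1-\sum_i 1/(1+e^{F(\{i\})})$, so the correct condition for $g'(0)<0$ is $\sum_i 1/(1+e^{F(\{i\})})>1$, or equivalently (via $\sigma(x)+\sigma(-x)=1$) $\sum_i 1/(1+e^{-F(\{i\})})<|V|-1$. Your numerical check ``$2/(1+e)<1$'' matches the latter form with $F(\{i\})=-1$, so the intended inequality is $\sum_i 1/(1+e^{-F(\{i\})})<|V|-1$ rather than what you wrote. This does not affect the main argument.
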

\begin{proof}
    For $t=0$, the Lagrange dual problem of Problem~\eqref{prob:infty} is
\begin{align}
\begin{split}
    \textrm{maximize}& -\sum_{A\seq V} F(A)\lambda_A + \sum_{i\in V} h(\sum_{A\ni i} \lambda_A) \\
    \textrm{subject to}& \quad 0\leq\sum_{A\ni i} \lambda_A\leq 1 \quad \textrm{for all } i\in V, \\
\end{split}
\end{align}
where $h(p)=-p\log p - (1-p)\log(1-p)$ is the binary entropy function (defined so that $h(0)=h(1)=0$).
The Lagrange dual is easily derived if we note that $-h(-u)$ is the covex conjugate of the primal objective and use~\cite{boyd2004convex}[\S 5.1.6].
Consider the function \[ F(\emptyset)=0, F(\{1\}) = -20, F(\{2\}) = -8, F(\{1,2\}) = -16,\]
which is \emph{supermodular} as
\begin{align*}
    F(1 \mid \{2\}) &= -16 - (-8) = -8 > -20 =  F(\{1\}), \textrm{ and} \\
    F(2 \mid \{1\}) &= -16 - (-20) = 4 > 8 = F(\{2\}). \\
\end{align*}
For $t=1$ consider the primal feasible variable $x = (-19, -7)$, which has an objective
value $< 27.1$. For $t=0$ take the dual variable
$\lambda_\emptyset = \lambda_V = 0, \lambda_1 = 1 = \lambda_2 = 1$, with a
value of $-F(1)\lambda_1 - F(0)\lambda_0 + 0 = 28 > 27.1$ and thus
a strictly better value is achieved for $t=1$.
\end{proof}

\section{Proofs for Section 6}
We will first define a dual formulation of the minimization problem, for
which the claimed message passing scheme does BCD.
We will work with the set of all valid Lagrange multipliers
$\Lambda = \{ \lambda\in\mathbb R^{\sum_{i=1}^R |V_i|} \colon (\forall v\in V) \:
\sum_{i\in\delta(v)} \lambda_{v, i}=0 \}$
and the product of base polytopes
$\B=\otimes_{i=1}^R B(F_i)$. For any element $\y$ of either of these
sets we will denote by $y_{i, v}$ the coordinate corresponding
to variable $v\in V_i$ of the $i$-th block ($1\leq i\leq R$). Moreover,
for any vector $\x\in \R^V$ we will denote by $\x|_S$ its restriction to
the coordinates $S\seq V$.

\begin{thm}[Following~\cite{jegelka2013reflection}]\label{thm:new-dual}
    The dual problem of
    \begin{equation}
        \textrm{minimize}_\x f(\x) + \frac{1}{2}\|\x\|^2 =
        \textrm{minimize}_\x \sum_{i=1}^R (f(\x|_{V_i}) + \frac{1}{2}\|\x|_{V_i}\|_{G}^2)
    \end{equation}
    is equal to
    \begin{equation}\label{eqn:dual-problem}
        \underset{\lambda\in\Lambda, \y_i\in B(F_i)}{\textrm{maximize}}
        \sum_{i=1}^R-\frac{1}{2}\|\y_i-\lambda_i\|_{G*}^2,
    \end{equation}
\end{thm}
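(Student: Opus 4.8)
The plan is to derive \eqref{eqn:dual-problem} as the Lagrangian dual of a \emph{consensus reformulation} of the primal, following~\cite{jegelka2013reflection} but carefully tracking the factor--graph weights. First I would introduce, for each factor $F_i$, a private copy $\x_i\in\R^{V_i}$ of the variables it touches and rewrite
\[
\min_{\x\in\R^V} f(\x)+\tfrac12\|\x\|^2
=
\min_{\substack{\x\in\R^V,\ \x_i\in\R^{V_i}\\ \x_i=\x|_{V_i}\ \forall i}}\ \sum_{i=1}^R\Bigl(f_i(\x_i)+\tfrac12\|\x_i\|_G^2\Bigr),
\]
where $f_i$ is the Lov\'asz extension of $F_i$. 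The splitting of the quadratic here is exact because each $v\in V$ lies in precisely $|\delta(v)|$ of the ground sets $V_i$, so $\sum_i\|\x|_{V_i}\|_G^2=\sum_v x_v^2\,|\delta(v)|\cdot\tfrac1{|\delta(v)|}=\|\x\|^2$; the same observation justifies the decomposition displayed in the theorem statement.

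Next I would dualize the linear coupling constraints $\x_i=\x|_{V_i}$ with multipliers $\lambda_i\in\R^{V_i}$. Since the objective is convex and finite everywhere and the constraints are affine, strong Lagrangian duality holds with no gap, so the primal value equals $\max_{\lambda}\min_{\x,\{\x_i\}}\mathcal{L}$ with $\mathcal{L}=\sum_i\bigl(f_i(\x_i)+\tfrac12\|\x_i\|_G^2+\langle\lambda_i,\x_i-\x|_{V_i}\rangle\bigr)$. The variable $\x$ enters $\mathcal{L}$ only through $-\sum_v x_v\sum_{i\in\delta(v)}\lambda_{i,v}$, so minimizing over $\x\in\R^V$ gives $-\infty$ unless $\sum_{i\in\delta(v)}\lambda_{i,v}=0$ for every $v$, i.e.\ unless $\lambda\in\Lambda$; and on $\Lambda$ those terms vanish and the remaining problem decouples over $i$.

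It then remains to evaluate, for each $i$, the inner minimum $\min_{\x_i\in\R^{V_i}}\bigl(f_i(\x_i)+\langle\lambda_i,\x_i\rangle+\tfrac12\|\x_i\|_G^2\bigr)$. Using that the Lov\'asz extension is the support function of the base polytope, $f_i(\x_i)=\max_{\y_i\in B(F_i)}\langle\y_i,\x_i\rangle$ (Edmonds' greedy characterization), and exchanging the min over $\x_i$ with the max over the compact convex set $B(F_i)$ — legitimate by Sion's minimax theorem, the integrand being affine in $\y_i$ and strictly convex and coercive in $\x_i$ — this equals $\max_{\y_i\in B(F_i)}\min_{\x_i}\bigl(\langle\y_i+\lambda_i,\x_i\rangle+\tfrac12\|\x_i\|_G^2\bigr)$. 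The inner unconstrained quadratic separates coordinatewise with weights $1/|\delta(v)|$, and the one-line computation $x_v^\star=-|\delta(v)|(y_{i,v}+\lambda_{i,v})$ gives its value as $-\tfrac12\|\y_i+\lambda_i\|_{G*}^2$. Summing over $i$ and using that $\Lambda$ is a linear subspace (so $\lambda\mapsto-\lambda$ maps $\Lambda$ bijectively onto itself) to flip the sign inside the norm produces exactly \eqref{eqn:dual-problem}.

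The main points requiring care will be the two interchange steps — invoking strong Lagrangian duality for the affine-constrained convex program, and the Sion minimax swap inside each block — together with keeping the $|\delta(v)|$ weights consistent between $\|\cdot\|_G$ and $\|\cdot\|_{G*}$. All of these are routine given the compactness of $B(F_i)$ and convexity, so once the consensus reformulation is set up the remainder is essentially bookkeeping.
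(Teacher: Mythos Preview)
Your proposal is correct and follows essentially the same route as the paper: consensus reformulation with per-factor copies, Lagrangian dualization of the coupling constraints to obtain the $\lambda\in\Lambda$ condition, the support-function representation $f_i(\x_i)=\max_{\y_i\in B(F_i)}\langle\y_i,\x_i\rangle$, a min--max swap, and then the conjugate of the weighted quadratic. The only cosmetic differences are your sign convention on the multipliers (necessitating the final $\lambda\mapsto-\lambda$ step, which the paper avoids by its choice of sign), your explicit invocation of Sion where the paper simply swaps, and your coordinatewise computation of the inner minimum where the paper quotes the conjugate of $\tfrac12\|\cdot\|_G^2$.
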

\begin{proof}The proof is based on that of~\cite{jegelka2013reflection}[Lemma 1].
The considered problem is the following.
\begin{align*}
\min_\x f(\x) + \frac{1}{2}\|\x|_2^2 &=
\min_\x \sum_{i=1}^R (f_i(\x_{V_i}) + \frac{1}{2}\|\x_{V_i}\|_G^2)
\;\textrm{s.t.}\;\x_{V_i} = \x|_{V_i}
\\
    &=
\min_{\x, \x_{V_i}} \max_{\lambda_i} \sum_{i=1}^R (f_i(\x_{V_i}) + \frac{1}{2}\|\x_{V_i}\|_G^2
    - \lambda_i^T(\x_{V_i} - \x|_{V_i})).
\end{align*}
Because we have zero duality gap, we can change the order of optimization.
Then, if we optimize for $\x$, we see that the Lagrange multipliers have to
belong to $\Lambda$, which was defined above. Hence, we have the following problem
\begin{align*}
    \max_{\lambda\in\Lambda} \sum_i \min_{\x_{V_i}} \max_{\y_i\in B(F_i)}
        (\x_{V_i}^T\y_i + \frac{1}{2}\|\x_{V_i}\|_G^2
            - \lambda_i^T\x_{V_i})
    = \max_{\lambda\in\Lambda} \sum_i\max_{\y_i\in B(F_i)}  \min_{\x_{V_i}}
        (\x_{V_i}^T\y_i + \frac{1}{2}\|\x_{V_i}\|_G^2
            - \lambda_i^T\x_{V_i}).
\end{align*}

Consider the inner problem, i.e.
\[  \minimize_{\x_{V_i}}
    \x_{V_i}^T\y_i + \frac{1}{2}\|\x_{V_i}\|_G^2 - \lambda_i^T\x_{V_i}
    =
    \minimize_{\x_{V_i}}
   \x_{V_i}^T(\y_i - \lambda_i) + \frac{1}{2}\|\x_{V_i}\|_G^2,
\]
which is exactly the negative of the convex conjugate of $\frac{1}{2}\|\cdot\|_{G}^2$ evaluated at
$\lambda_i - \y_i$. Because the convex conjugate is equal to
$\frac{1}{2}\|\cdot\|_{G*}^2$ (see e.g.\ \cite{boyd2004convex}[Ex.\ 3.22]),
the above minimum is equal to $-\frac{1}{2}\|\y_i-\lambda_i\|_{G*}^2$, which we
had to show.

\end{proof}

We can now easily see that the message passing algorithm is doing BCD for
the dual --- each node $F_i$ is first projecting to $\Lambda$
by subtracting the component-wise mean, and then clearly projecting onto $B(F_i)$
under the norm defined in Definition~\ref{defn:norms}.

We will now show the linear convergence rate. Because we consider the $k$-regular case, the
primal can be written in the following simpler form
\[
    \underset{\x}{\minimize} \sum_{i=1}^r f(\x|_{V_i}) + \frac{1}{2k}\|\x|_{V_i}\|^2,
\]
and the decomposed dual becomes the problem of finding the closest points
between $\Lambda$ and $\B$
\begin{equation}\label{eqn:dual-problem}
    \underset{\lambda\in\Lambda, \y_i\in B(F_i)}{\textrm{maximize}}
    \sum_i-\frac{k}{2}\|\y_i-\lambda_i\|^2.
\end{equation}

We now use exactly the same argument as in~\cite{nishihara2014convergence}[\S 3.3]
with some small changes necessary to accommodate our different definitions of
$\Lambda$ and $\B$. Please refer to that paper and references therein for the
terminology used in the remaining of the proof. We will show that the
Friedrich's angle between any two faces of $\B$ and $\Lambda$ is at most
$\frac{2}{k^2|V|^2}$, which combined with~\cite{nishihara2014convergence}[Thm.\ 2 and Cor. 5]
implies
the theorem. To make the notation easier to parse, we will assume that the
elements in $\B$ and $\Lambda$ are ordered so that first come the $|V_1|$
elements corresponding to $F_1$, then the $|V_2|$ elements corresponding to $F_2$
and so forth. Under this ordering, the vector space $\Lambda$ can be written as
the nullspace of the following matrix
\begin{equation}
    S = \frac{1}{\sqrt{k}} \bigg(\begin{array}{c|c|c}
        \underbrace{S_1}_{\in\R^{V\times V_1}}
            &\ldots &
            \underbrace{S_R}_{\in\R^{V\times V_R}} \end{array}\bigg),
        \text{where} \; [V_i]_{v, v'} = [v=v'].
\end{equation}
As noted in~\cite{nishihara2014convergence}, using~\cite{fbach-ft}[Prop.\ 4.7]
we can express the affine hull $\mathrm{aff}_0(\B_z)$ of any face $\B_z$ as
(where for each $i\in\{1,\ldots,R\}$ the sets $A_{r,1},\ldots,A_{r,M_r}$ form a
partition of $V_i$)
\[
    \textrm{aff}_0(\mathcal{B}_z) =
    \bigcap_{r=1}^R\bigcap_{m=1}^{M_r}
    \{ (\y_1,\ldots, \y_R) \colon y_r(A_{r,1}\cup\ldots\cup A_{r,m})=0 \},
    \textrm{where} \; \y_i\in B(F_i).
\]
This set can be also written as the nullspace of the following matrix
\[
T =
\begin{pmatrix}
    \frac{\mathbf{1}^T_{A_{1,1}}}{\sqrt{|A_{1,1}|}} &  &  \\
    \frac{\mathbf{1}^T_{A_{1,2}}}{\sqrt{|A_{1,2}|}} &  &  \\
    \vdots & &  \\
    \frac{\mathbf{1}^T_{A_{1,M_1}}}{\sqrt{|A_{1,M_1}|}} &  &  \\
    & \ddots & \\
    & & \frac{\mathbf{1}^T_{A_{1,1}}}{\sqrt{|A_{R,1}|}} \\
    & & \frac{\mathbf{1}^T_{A_{1,2}}}{\sqrt{|A_{R,2}|}} \\
    & & \vdots \\
    & & \frac{\mathbf{1}^T_{A_{1,M_R}}}{\sqrt{|A_{R,M_R}|}}
 \end{pmatrix}.
\]
To compute the Friedrich's angle we are interested in the singular values of
$ST^T$ \cite{nishihara2014convergence}[Lemma 6], which is equal to
\[
    ST^T = \frac{1}{\sqrt{k}}
    \bigg(
    \frac{\mathbf{1}_{A_{1,1}}}{\sqrt{|A_{1,1}|}},\ldots,
    \frac{\mathbf{1}_{1,M_1}}{\sqrt{|A_{1,M_1}|}},\ldots
    \frac{\mathbf{1}_{A_{R,1}}}{\sqrt{|A_{R,1}|}},\ldots,
    \frac{\mathbf{1}_{R,M_R}}{\sqrt{|A_{R,M_R}|}} \bigg).
\]
Hence, we have to analyze the eigenvalues of the square matrix $(ST^T)^T(ST^T)$,
whose rows and columns are indexed by
$\mathcal{I} = \{ (r,m) \colon r\in [R], m\in [M_r] \}$, and whose $((r_i,m_i), (r_j, m_j))$
entry is
\[
\frac{1}{k}\frac{|A_{r_i,m_i}\cap A_{r_j, m_j}|}
     {\sqrt{|A_{r_i,m_i}||A_{r_j, m_j}|}}.
\]
We create a graph with vertices $\mathcal{I}$ and we add edges between distinct
$(r_i, m_i)$ and $(r_j, m_j)$ with weight $|A_{r_i,m_i}\cap A_{r_j,m_j}|$
(zero weight means that we do not add that edge).  The normalized graph
Laplacian of this graph is equal to
\[
    [\mathcal{L}]_{(r_i,m_i), (r_j,m_j)} =
    \begin{cases}
        1 & \textrm{if } (r_i,m_i) = (r_j,m_j) \\
        - \frac{1}{k-1}\frac{|A_{r_i,m_i}\cap A_{r_j, m_j}|}
          {\sqrt{|A_{r_i,m_i}||A_{r_j, m_j}|}}
          & \textrm{otherwise}.
    \end{cases}
\]
Hence, $(ST^T)^T(ST^T)=I-\frac{k-1}{k}\mathcal L$.
We want to
lower-bound the Cheeger constant $h$ of this graph. Because the spectrum of a
graph is the union of the spectra of the connected components we will assume
that the graph is connected, as we can apply the same argument to every
component. From the definition of $h$ we have that
\[
    h \geq 2\frac{\textrm{minimum cut}}{\textrm{volume}},
    \textrm{where} \;
    \textrm{volume} =
    \sum_{v\in V} |\delta(v)|(|\delta(v)|-1) = |V|(k^2-k).
\]
What remains is to bound the minimum cut from below. Because the graph is
connected, for any cut $U$ there must exist some $v$ that is in sets on both
sides of the cut. Let $m$ be the number of sets in $U$ that contain it, and let
$k-m$ be the number of sets in the complement that contain it.
Then, the cut is of size is at least $m(k-m) \geq k-1$. Hence
\[
    h \geq 2\frac{k-1}{|V|(k^2-k)} = \frac{2}{|V|k}.
\]
And by Cheeger's inequality the smallest positive eigenvalue $\lambda_2$ of
the Laplacian $\mathcal L$ is at least $\frac{2}{k^2|V|^2}$, which from the
relationship $(ST^T)^T(ST^T)=I-\frac{k-1}{k}\mathcal L$ implies that the
squared Friedrich's angle $c_F^2$ between $\Lambda$ and $\B$ is at most
\[
    c_F^2\leq c_F= 1-\frac{k-1}{k}\lambda_2 \leq 1 - \frac{1}{|V|^2k^2},
\]
which completes the proof.

\section{Experiments}

We have used the parameter values in the table below.

\begin{figure}[h]
    \centering
\begin{tabular}{ll}
    Parameter & Values \\
    \toprule
    $\theta$ & $0.1, 0.001, 0.0001$ \\
    $\alpha$ & $1, 0.1, 0.01, 0.001$ \\
    $\beta$ & $10, 1, 0.1, 0.01, 0.001$ \\
    $\gamma$ & $10, 1, 0.1, 0.01, 0.001$ \\
    \bottomrule
\end{tabular}
\end{figure}
\clearpage{}

\end{document}